\newtheorem{theorem}{Theorem}[section]
\newtheorem{lemma}[theorem]{Lemma}
\newcommand{\crps}{\text{CRPS}}
\newcommand{\wcrps}{\text{wCRPS}}
\newcommand{\twcrps}{\text{twCRPS}}
\title{Improving probabilistic forecasts of extreme wind speeds by training statistical post-processing models with weighted scoring rules}
\author{ \href{https://orcid.org/0000-0003-2621-2477}{\includegraphics[scale=0.06]{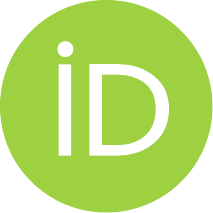}\hspace{1mm}Jakob Benjamin Wessel}\thanks{Correspondence to: j.wessel@exeter.ac.uk. Also at: The Alan Turing Institute, London, United Kingdom} \\
	Department of Mathematics and Statistics\\
	University of Exeter\\
	Exeter, United Kingdom \\
	\And
	\href{https://orcid.org/0000-0002-9830-9270}{\includegraphics[scale=0.06]{orcid.pdf}\hspace{1mm}Christopher A. T. Ferro} \\
	Department of Mathematics and Statistics\\
	University of Exeter\\
	Exeter, United Kingdom \\
    \And
	\href{https://orcid.org/0000-0002-5935-3965}{\includegraphics[scale=0.06]{orcid.pdf}\hspace{1mm}Gavin R. Evans} \\
	Met Office\\
	Exeter, United Kingdom \\
    \And
	\href{https://orcid.org/0000-0003-1421-4010}{\includegraphics[scale=0.06]{orcid.pdf}\hspace{1mm}Frank Kwasniok} \\
	Department of Mathematics and Statistics\\
	University of Exeter\\
	Exeter, United Kingdom \\
}
\begin{document}
\maketitle

\begin{abstract}
Accurate forecasts of extreme wind speeds are of high importance for many applications. Such forecasts are usually generated by ensembles of numerical weather prediction (NWP) models, which however can be biased and have errors in dispersion, thus necessitating the application of statistical post-processing techniques. In this work we aim to improve statistical post-processing models for probabilistic predictions of extreme wind speeds. We do this by adjusting the training procedure used to fit ensemble model output statistics (EMOS) models – a commonly applied post-processing technique – and propose estimating parameters using the so-called threshold-weighted continuous ranked probability score (twCRPS), a proper scoring rule that places special emphasis on predictions over a threshold. We show that training using the twCRPS leads to improved extreme event performance of post-processing models for a variety of thresholds. We find a distribution body-tail trade-off where improved performance for probabilistic predictions of extreme events comes with worse performance for predictions of the distribution body. However, we introduce strategies to mitigate this trade-off based on weighted training and linear pooling. Finally, we consider some synthetic experiments to explain the training impact of the twCRPS and derive closed-form expressions of the twCRPS for a number of distributions, giving the first such collection in the literature. The results will enable researchers and practitioners alike to improve the performance of probabilistic forecasting models for extremes and other events of interest.
\end{abstract}

\keywords{Probabilistic Weather Forecasting \and Statistical post-processing \and Scoring Rules \and Wind speed \and Extreme Events}
\section{Introduction}

Extreme weather events such as heavy precipitation or windstorms have the potential to cause tremendous damage to lives and livelihoods. To mitigate their impact the development of early warning systems is crucial, which rely on accurate forecasts of relevant extremes. In this work, we aim to improve statistical and machine learning models for probabilistic predictions of extreme wind speeds for applications to the post-processing of numerical weather prediction (NWP) wind speed forecasts. \\

Weather forecasts are usually generated by ensembles of NWP models, each with different initial conditions and possibly perturbed models, to quantify the uncertainty present in atmospheric phenomena. These ensembles, however, often contain biases and errors in dispersion, thus necessitating the application of statistical post-processing techniques to generate accurate and well-calibrated probabilistic forecasts. This can be especially relevant for extreme events, whose forecasts might be improved most from post-processing, but which also require special care when using common techniques \citep{friederichs_forecast_2012, williams_comparison_2014, vannitsem_2018_chapter_5, hess_statistical_2020, velthoen_gradient_2023}.\\

For post-processing of wind speed forecasts with particular focus on extremes, different models have been proposed in the literature. \citet{lerch_comparison_2013} suggest post-processing based on ensemble model output statistics \citep[EMOS, ][]{gneiting_calibrated_2005} -- a commonly applied post-processing technique -- using the truncated normal and generalized extreme value (GEV) distributions as well as a regime switching model. \citet{baran_log-normal_2015} propose post-processing based on log-normal and truncated normal distributions as well as regime switching between the two. \citet{allen_accounting_2021} incorporate weather regimes based on the North Atlantic Oscillation into their wind speed post-processing using truncated logistic distributions, showing that this leads to improved predictive performance at extremes. \citet{oesting_statistical_2017} introduce post-processing of wind gusts forecasts based on a bivariate Brown-Resnick process. \\


Whilst the above authors propose different model structures that enable more accurate inference, we follow a different approach and change the training procedure used to fit models, in order to improve performance at extremes. This can in future work be readily extended to different model specifications. In order to improve the performance of predictive models at extremes, in recent times, a number of different authors have suggested to adjust loss functions used for the training of these models, which are often based on deep neural networks \citep{ding_modeling_2019, chen_novel_2022, hess_deep_2022, scheepens_adapting_2023, olivetti_advances_2024}. However, these works focus mostly on deterministic forecasts, and as we show in section \ref{sec:motivation} the weighting schemes that these authors propose might not be suitable for probabilistic forecasts as they lead to hedging of the forecast distribution. \\

We propose adjusting training by estimating models using weighted proper scoring rules, in particular the so-called threshold-weighted continuous ranked probability score (twCRPS). This proper scoring rule, introduced by \citet{gneiting_comparing_2011} has long been used for evaluation of predictive models, but to the authors' knowledge not yet for systematic parameter optimization. In this work, we focus on post-processing using ensemble model output statistics (EMOS), a typical post-processing method, and introduce the usage of the twCRPS to estimate the model parameters. We show that this improves predictive performance for extremes, for a variety of extremal thresholds. We set out some of the choices and challenges that forecasters need to consider when doing this, and describe the effects of these choices in real and simulated data, closing with recommendations of what to consider when applying this in operational settings. We also give different characterizations of the twCRPS and derive closed-form expressions for a number of predictive distributions.\\

This article is organized as follows. First, in section \ref{sec:data_methods} we will present the data and methods used for post-processing and forecast verification in this study. We will then consider a motivating example in section \ref{sec:motivation} to highlight the need for propriety of scores used for the training of probabilistic forecasting models for extremes. In section \ref{sec:weighted_scoring_rules} we introduce weighted scoring rules and the threshold-weighted continuous ranked probability score (twCRPS) that we use in section \ref{sec:train} to train post-processing models with a focus on extreme wind speeds. We show that this improves forecast performance for extremes and discuss some choices and challenges that forecasters might encounter in sections \ref{sec:first_results}, \ref{sec:threshold} and \ref{sec:datasize}, namely the role of the threshold for training models and the role of the training datasize. In section \ref{sec:combinations} we propose two strategies to balance different objectives that forecasters might encounter when wanting to improve forecasts for extremes: how to balance the forecast performance for the distribution body and the distribution tails. Finally, in section \ref{sec:synth_ex} we present a number of synthetic experiments to better understand the training impact of the twCRPS and interpret our results. We conclude in section \ref{sec:conclusion}, discussing the results and giving some recommendations for forecasters wishing to apply the methodology. In the Appendix we give a number of different characterizations of the twCRPS and derive closed-form expressions for a number of distributions.

\section{Data and methods}
\label{sec:data_methods}
In this section we briefly describe the data and methods used for post-processing of wind speed forecasts and verification.

\subsection{Data}

In this study we consider forecasts of 10m wind speed at surface observation stations in the United Kingdom at a lead time of 48h, initialized between 1 April 2019 and 31 March 2022. The forecasts are taken from the 0000 UTC initialization of the Met Office global ensemble prediction system (MOGREPS-G, \citealt{walters_met_2017, porson_recent_2020}), an 18-member ensemble (including the unperturbed control run) with a horizontal resolution of 20km. The forecasts are bilinearly regridded to the locations of 124 surface synoptic observation (SYNOP) stations in the United Kingdom (see Figure \ref{fig:locations} for the locations) and verified against wind speed observations (10min average of 10m wind speed) there. We summarize the forecast ensemble by the ensemble mean and ensemble standard deviation. We use forecasts issued between 1 April 2019 and 31 December 2020 for training and forecasts issued between 1 January 2021 and 31 March 2022 as an independent test set.

\begin{figure}[ht]
        \centering
        \includegraphics[width=0.35\textwidth]{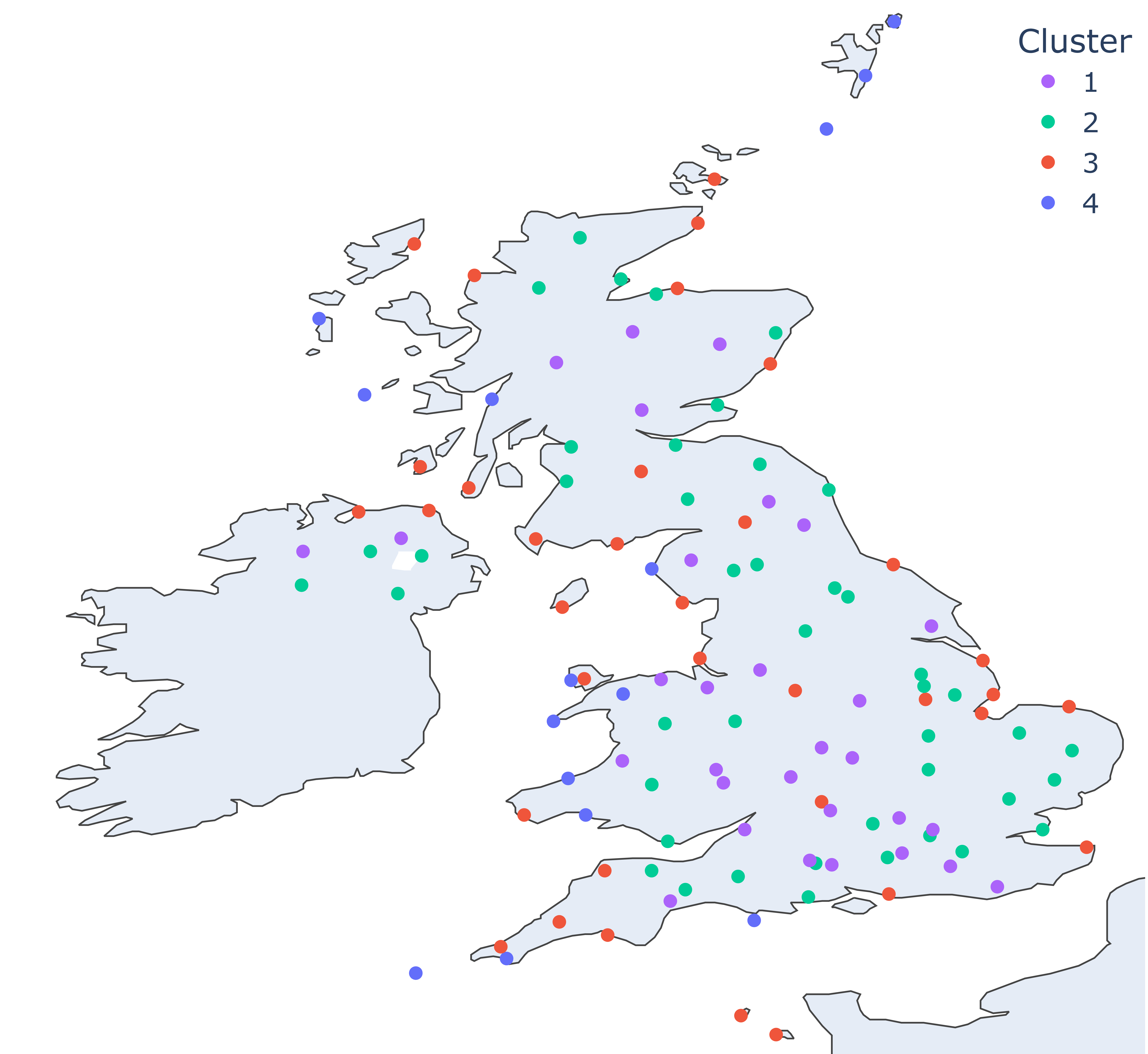}
        \caption{Locations of 124 SYNOP stations used for the verification of wind speed forecasts, together with associated clusters (see section \ref{sec:methods} and table \ref{tab:percentiles} in the appendix for explanation).}
        \label{fig:locations}
\end{figure}

\subsection{Post-processing methods}
\label{sec:methods}

In this study we consider post-processing based on ensemble model output statistics (EMOS) also known as non-homogeneous (Gaussian) regression (NGR, \citealt{jewson_new_2004, gneiting_calibrated_2005}). EMOS is one of the most commonly used post-processing methods, both in research works and operational forecasting environments \citep{hess_statistical_2020, roberts_improver_2023}, owing to its simplicity, robustness and extensibility. Within the EMOS framework the variable of interest (wind speed $W$ here) is assumed to follow a parametric distribution $\mathcal{D}$ whose parameters are modelled in terms of the raw ensemble $x_1, \dots, x_N$, often summarized by ensemble mean $m$ and ensemble standard deviation $s$. In this study, we consider both truncated normal $\mathcal{D} = \mathcal{N}_0$ \citep{thorarinsdottir_probabilistic_2010} and truncated logistic $\mathcal{D} = \mathcal{L}_0$ \citep{messner_extending_2014, scheuerer_probabilistic_2015} as predictive distributions for wind speed (both truncated at zero). In addition to the ensemble mean and ensemble standard deviation we also include sine-cosine transformations of the normalized day of year ($\text{ndoy} := 2 \pi \tfrac{\text{day of year}}{365.25}$) as predictors, to account for seasonality \citep{lang_remember_2020}. We model:
\begin{gather}
            W\, |\, m, s  \sim \mathcal{D}(\mu, \sigma),\\
    \mu = \alpha + \beta m + \lambda_{\mu, {\rm s}} \sin\left(\text{ndoy} \right) + \lambda_{\mu, {\rm c}} \cos \left(\text{ndoy} \right), \\
    \log \sigma = \gamma + \delta s + \lambda_{\sigma, {\rm s}} \sin\left(\text{ndoy} \right) + \lambda_{\sigma, {\rm c}} \cos \left(\text{ndoy}\right).
\end{gather}
Here $\mu$ and $\sigma$ are the location and scale parameters of the predictive distribution $\mathcal{D}$. Sometimes, the predictive standard deviation $s$ is also log-transformed to ensure that $s$ and $\sigma$ are on the same scale, however, this did not make a difference in performance here.\\
EMOS models are usually fit by minimizing either the negative log-likelihood or the continuous ranked probability score (CRPS). For a probabilistic forecast given through a cumulative distribution function (CDF) $F$ and an observation $y$ the latter is defined as:
\begin{align}
\label{CRPS}
    \crps(F, y) = \int_{-\infty}^{\infty} \left[F(x) - \mathbb{1}\{x \geq y\}\right]^2 dx.
\end{align}
Both the CRPS and the negative log-likelihood are proper scoring rules \citep{gneiting_strictly_2007}, meaning that in expectation they are minimal when the forecast distribution is the true distribution (see section \ref{sec:weighted_scoring_rules} for some more discussion of propriety). Closed-form expressions of the CRPS exist for the truncated normal and truncated logistic distributions \citep[see][]{jordan_evaluating_2019} and are used for the minimization. \\

EMOS models can be fit individually to a single site, or to sets of sites. For example, the configuration of the Met Office IMPROVER system described in \citet{roberts_improver_2023} fits a single EMOS model for wind speed to post-process forecasts jointly to measurement sites, including the 124 stations presented here. As we will highlight in section \ref{sec:datasize} large data is needed for training using weighted scoring rules. Thus, to increase the datasize for training and evaluation of each EMOS fit we use a semilocal post-processing approach as introduced by \citet{lerch_similarity-based_2016}. In this approach EMOS models are fit to clusters of similar stations. We cluster based on the observed climatology of wind speed at each station, which has the advantage that quantile-based thresholds are homogeneous within each cluster. \\

Following \citet{lerch_similarity-based_2016} we compute K = 24 features for each station, corresponding to the $1/(K+1), \dots K/(K+1)$ quantiles of the location-wise distribution of observed wind speed in the training dataset. In a second step k-means clustering \citep{murphy_machine_2012} is used to identify sets of similar stations from the features of all stations. We use the ``elbow-method" to identify four distinct clusters, shown in Figure \ref{fig:locations}. The clusters broadly correspond to coastal locations (red), more exposed coastal locations (blue), more exposed inland locations (green) and less exposed inland locations (purple). In a second step EMOS models are then fit to jointly post-process wind speed forecasts for all stations within one cluster. Although four clusters are used here, the results below were consistent for other choices of cluster numbers, as long as a minimum amount of stations were present in each cluster. The 80th and 90th percentile wind speed in each cluster are shown in Table \ref{tab:percentiles} in the appendix.\\

Unless otherwise stated in this paper we use a Broyden-Fletcher-Goldfarb-Shanno (BFGS) quasi-Newton optimizer \citep{nocedal_numerical_2006} with empirical gradient estimates for optimizing scoring functions, which falls back onto a Nelder-Mead optimizer \citep{nelder_simplex_1965} should it fail. Using analytic gradients is also possible and might increase the optimization efficiency, but we leave such exploration for future work. We follow the practice of the \texttt{crch} package \citep{messner_crch_2022} and use the coefficients of a linear regression as starting values for the location parameters ($\alpha, \beta, \lambda_{\mu, {\rm s}}, \lambda_{\mu, {\rm c}}$) and the log-transformed standard error as the starting value of the intercept of the scale parameter ($\gamma$), whilst setting all other scale parameters ($\delta, \lambda_{\sigma, {\rm s}}, \lambda_{\sigma, {\rm c}}$) initially to zero. \\

\subsection{Forecast verification}

The goal of probabilistic forecasting has been put by \citet{gneiting_calibrated_2005} as to ``maximize \textit{sharpness} subject to \textit{calibration}". Sharpness is a measure of the forecast distribution only and refers to the concentration of the forecast density function, whilst calibration is a joint property between observations and forecast and corresponds to the consistency between the forecast distributions and the observations. As this paper focuses on forecasts of extreme events, this will also guide our forecast verification. \\

Sharpness of forecasts can be evaluated via statistics of the predictive distribution. More often however, this is evaluated jointly with calibration via the usage of proper scoring rules. In particular we will use the CRPS and threshold-weighted CRPS for forecast evaluation, which are both proper scoring rules, and which we will introduce in section \ref{sec:weighted_scoring_rules}.  Calibration on the other hand, which is a ``necessary condition for the optimal use and value of a forecast" \citep{vannitsem_2018_chapter_6}, is usually also assessed individually. Most commonly calibration is evaluated using probability integral transform (PIT) histograms \cite[see for example][]{vannitsem_2018_chapter_6}. These are histograms of the predictive CDF $F_i$, evaluated at the observation $y_i$:  $F_i (y_i)$. Under probabilistic calibration PIT values should be uniform, corresponding to a flat histogram. Alternatively, they can also be visualized against theoretical quantiles from a uniform distribution in a quantile-quantile plot (QQ-plot), in which case the PIT values should lie on the diagonal axis. \\

As pointed out by \citet{allen_weighted_2023, gneiting_regression_2023, allen_tail_2024}, probabilistic calibration as assessed by uniformity of PIT histograms does not necessarily imply probabilistic calibration for all subsets of the observations that are of interest. For example, a forecast could be probabilistically calibrated overall, without being probabilistically calibrated in the tail of the observational distribution. To assess probabilistic calibration for tail events one can evaluate conditional PIT histograms as suggested by \citet{allen_weighted_2023, mitchell_censored_2023}. For a forecast $F_i$ define the excess distribution over a threshold $\tau$ as:
$$F_{i, \tau} (x) = \frac{F_i(x) - F_i(\tau)}{1-F_i(\tau)},\quad x \geq \tau$$
The conditional PIT (CPIT) values are the values of the forecast excess distribution evaluated at observations $y_i \geq \tau$: $F_{i, \tau}(y_i)$. These should again be uniform if the forecast is probabilistically calibrated over the threshold (in the tails) which can again be assesed using histograms or QQ-plots. \\

The conditional PIT assesses the probabilistic calibration for the distribution of threshold exceedances. \citet{allen_tail_2024} also introduce the notion of probabilistic tail calibration, which in addition to the excess distribution also incorporates the probability of threshold exceedance. A forecast is tail calibrated if the following holds for $\tau$ approaching the upper endpoint of the distribution of observations e.g. $\tau \to \infty$ :
$$
\frac{\mathbb{P}(F_{\tau}(Y) \le u, Y > \tau)}{\mathbb{E}(1 - F(\tau))} = u \quad \text{for all} \quad u \in [0, 1].
$$
This can be empirically assessed using the following ratio:
$$
\hat{R}_{\tau}(u) = \frac{\sum_{i \in I_{\tau}} \mathbb{1}\{z_{i}^{\tau} \leq u \}}{ \sum_{i = 1}^{n} (1 - F_{i}(\tau))} = \frac{1}{|I_\tau|} \sum_{i \in I_{\tau}} \mathbb{1}\{z_{i}^{\tau} \leq u \} \cdot \frac{|I_\tau|}{ \sum_{i = 1}^{n} (1 - F_{i}(\tau))}, \quad u \in [0, 1].
$$
here $z_1^\tau, \dots z_n^\tau$ are the conditional PIT values and $I_{\tau} = \{i: y_i > \tau\}$. This ratio should be equal to or close to $u$ for all values of $u\in [0,1]$. It corresponds to the empirical distribution of conditional PIT values, multiplied by the occurrence ratio of realized over expected threshold exceedances. To summarize the behavior of $\hat R_\tau(u)$ the authors also suggest using the following as a measure of tail miscalibration (TMCB):
\begin{align}
    \label{eq:TMCB}
    \text{TMCB} = \sup_{u \in [0, 1]} \left| \hat{R}_{\tau}(u) - u \right|.
\end{align}

\section{Motivating the use of weighted proper scoring rules}
\label{sec:motivation}

In order to improve the performance of data-driven models for prediction of extreme events multiple authors propose adjusting loss functions used to learn parameters of these models \citep[e.g.][]{ding_modeling_2019, chen_novel_2022, olivetti_advances_2024}. Among others \citet{hess_deep_2022} and \citet{scheepens_adapting_2023} propose weighting schemes for loss functions to weight misprediction at extremes stronger during training. Many of the above authors focus on deterministic predictions of extremes, however as \citet{friederichs_forecast_2012} point out to allow for the assessment of the potentially large uncertainty in predictions of extreme events, such predictions should be ``probabilistic in nature". In this section we will study the effect of a loss function weighting strategy for probabilistic extreme event forecasting in the setting of post-processing. This is to motivate the need for proper scoring rules, when training probabilistic forecasting models, which we discuss in the next section. \\

\citet{hess_deep_2022} propose a weighting scheme for loss functions to penalize stronger the mis-prediction of high precipitation values during training. They only consider deterministic forecasts, trained using root mean squared error (RMSE), however, we consider this weighting scheme adapted for the training of probabilistic post-processing models for wind speed. We consider EMOS post-processing models using a truncated normal distribution, trained  by minimizing the following loss (henceforth referred to as HB-loss):
\begin{align}
\label{L1}
     L (F, y) := w_{a, b} (y) \ \crps(F, y)\ \ \ \ \text{ with } \ \ \ \ w_{a, b}(y) := \min \left[1, \exp \left(\tfrac{y - a}{b}\right) \right].
\end{align}
This weighting scheme means that prediction quality (measured through the CRPS) of high observed wind speed values will carry more weight during training. We choose $a$ as the cluster-wise 90th percentile of the observations of the training dataset and $b$ as the cluster-wise standard deviation of the observational training dataset. We score predictions against EMOS models trained classically using the CRPS.\\

Investigating the trained model's performance on the test set we find that when training EMOS post-processing models using the HB-loss the resulting models have much lower HB-loss on the test set (9.5\% lower) compared to the CRPS-trained model. Looking at the dataset of test set extremes -- wind speed values over the location-wise 90th percentile of the training set $\tau$ -- we find that the HB-trained model is far better than the CRPS-trained model having 86.45\% better RMSE and 26.3\% better CRPS respectively on this dataset. The model has good out-of-sample HB-loss performance and seems to be able to capture the intensity of extremes quite well. \\

Next to the intensity we also analyse the performance of the models at predicting the occurrence of extremes by evaluating the Brier score -- a proper scoring rule for binary events/threshold exceedances:
\begin{align}
    BS_\tau(F, y) := \left[\left( 1- F(\tau)\right) - \mathbb{1}\{y \geq \tau\} \right]^2.
\end{align}
We find that the HB-trained model strongly overpredicts extremes, having 18\% worse Brier score than the CRPS-trained model. A similar picture can be seen by considering PIT histograms. Figure \ref{fig:pit_hb} shows the histograms for the CRPS and HB-trained model. Whilst the CRPS-trained model seems approximately calibrated the HB-trained model has a right tail that is much too heavy and seems to strongly underpredict values in the body of the climatological distribution. \\

\begin{figure}[ht]
        \centering
        \includegraphics[width=0.6\textwidth]{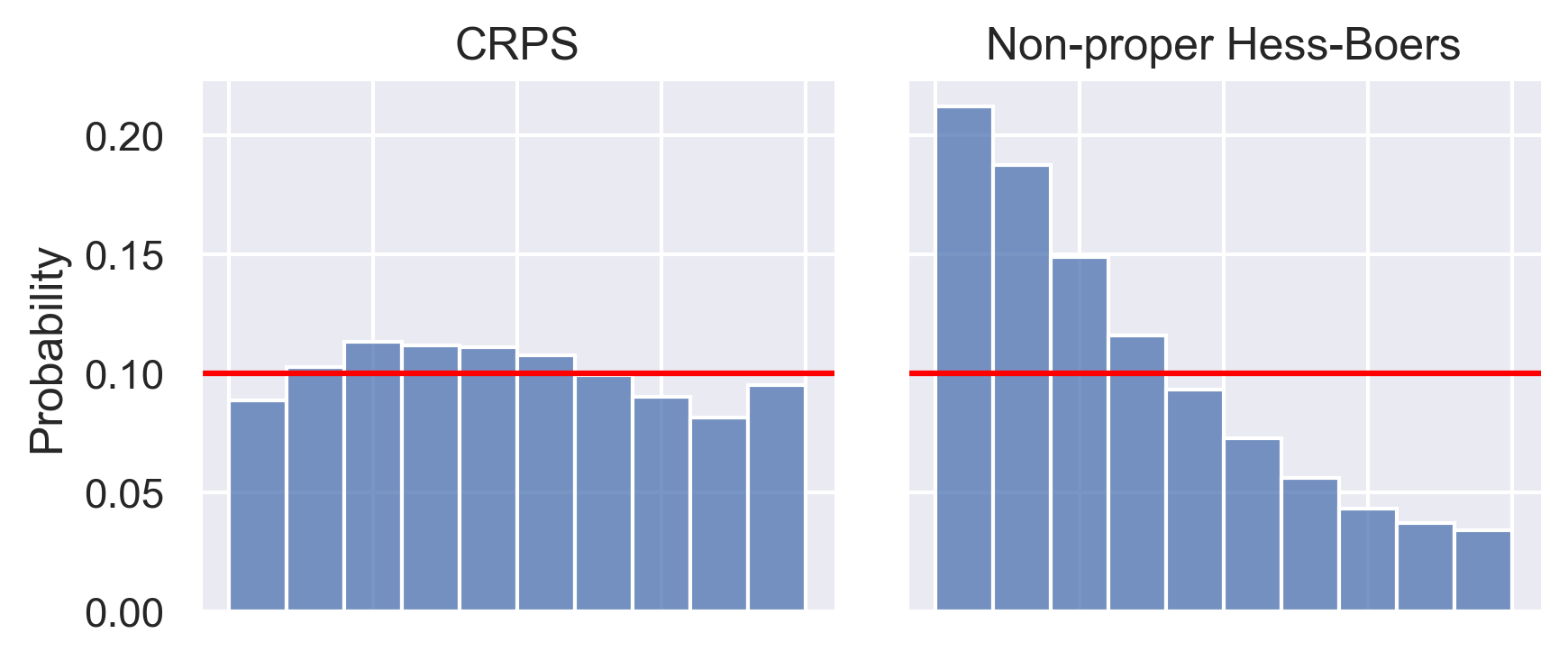}
        \caption{PIT histograms for wind speed post-processing models using truncated normal distributions, trained using CRPS loss (left) and HB-loss (equation \ref{L1}, right). Pooled over all clusters.}
        \label{fig:pit_hb}
\end{figure}

When evaluating only on test set extremes the HB model seems to be a suitable prediction model, however, when considering the full dataset the model strongly overpredicts the occurrence of extremes and seems strongly mis-calibrated.  A similar result also occurs with a very related strategy of oversampling extremes prior to training, often used to adjust for class imbalance in classification problems \citep{van_den_goorbergh_harm_2022}. The need to evaluate not only the intensity of extreme event forecasts, but to do this jointly with the predicted occurrence has been highlighted by many authors and is commonly referred to as the ``forecaster's dilemma" \citep{lerch_forecasters_2017}. If only forecasts of intensity are evaluated, then this encourages forecasters to overpredict extreme values, something that is not a problem in cases where there is no penalty for false-positive extreme event warnings. However, in real applications this will quickly erode trust in any forecasting system and lead to economic costs. The CRPS is a proper scoring rule, thus using it for training and evaluation encourages honesty of the forecasters and calibrated forecasts. However, when weighting the CRPS by an observed outcome, for example by restricting evaluation to the dataset of test set extremes or by using a weighting such as the HB-weighting, propriety is lost. This makes the score unsuitable for objective evaluation and for training (see next section).

\section{Weighted scoring rules for assessing forecasts of extremes}
\label{sec:weighted_scoring_rules}

 Proper scoring rules \citep{gneiting_strictly_2007} have been introduced for the evaluation (and training) of probabilistic forecasts. They assess both calibration and sharpness. Briefly, a scoring rule $S$ is proper when the following holds for any probabilistic forecast $F$ and true distribution $G$:
\begin{align}
\label{proper}
    \mathbb{E}_{y \sim G} S(F, y) \geq \mathbb{E}_{y \sim G} S(G, y).
\end{align}
Here $S(F, y)$ is the score of the predictive CDF at the observation $y$ and $\mathbb{E}_{y \sim G} S(F, y)$ denotes its  expectation over the distribution of $G$. A scoring rule is thus called proper when in expectation it is minimal at the true distribution. It is called strictly proper if equality in equation \ref{proper} holds only when $F = G$. \\

When wanting to focus probabilistic forecast evaluation on certain events of interest such as extreme events over a threshold, one can try to restrict the evaluation to these events or weight them stronger. Mathematically, both correspond to constructing a scoring rule weighted by an observation: $\tilde S (F,y) := w(y) \ S(F, y)$. However, as \citet{gneiting_comparing_2011} and \citet{lerch_forecasters_2017} show, if the weighting function is non-constant the resulting scoring rule becomes non-proper. Concretely, the expected weighted score $\mathbb{E}_{y \sim G} \tilde S (F, y)$ is no longer minimal at $F = G$ but at the predictive distribution corresponding to the density
\begin{align}
    f(y) = \frac{w(y)g(y)}{\int w(x) g(x) dx} \propto w(y)g(y),
\end{align}
where $g$ is the density corresponding to $G$. Thus, when evaluating probabilistic forecasting models using scoring rules restricted to an outcome (such as events over a threshold) or scoring rules weighted by some weighting function (such as the HB-loss) this encourages hedging of the forecast distribution, meaning a forecaster is encouraged to deviate from the true distribution in their prediction by an amount determined by the weighting function. This also affects model training as can be seen in section \ref{sec:motivation}.\\

To address this problem \citet{gneiting_comparing_2011} introduce proper weighted scores and especially the weighted continuous ranked probability score (wCRPS). These remain proper whilst focusing inference onto certain values of interest. The weighted CRPS is defined as
\begin{align}
    \wcrps(F,y) =  \int_{-\infty}^{\infty} w(x) \left[F(x) - \mathbb{1}\{x \geq y\}\right]^2 dx,
\end{align}
for a nonnegative weighting function $w$. This reduces to the unweighted CRPS (equation \ref{CRPS}) when $w(x) \equiv 1$. The canonical weighting is indicator weighting: $w_\tau(x) = \mathbb{1}\{x \geq \tau\}$, when there is interest in values above (or below) a certain threshold $\tau$, which then leads to the so-called threshold-weighted CRPS (twCRPS):
\begin{align}
    \label{eq:twcrps}
    \twcrps_\tau(F,y) =  \int_{\tau}^{\infty} \left[F(x) - \mathbb{1}\{x \geq y\}\right]^2 dx.
\end{align}
The weighted CRPS and especially the threshold-weighted versions have become frequently used tools for forecast evaluation of extreme events \citep[e.g.][]{lerch_comparison_2013, baran_log-normal_2015, baran_censored_2016, allen_incorporating_2021}. They have also been generalized to multivariate versions \citep{allen_weighted_2023, allen_evaluating_2023}. In the following we will focus on the twCRPS as given in equation \ref{eq:twcrps}.\\

The CRPS corresponds to the integral over the Brier score over all possible thresholds, whilst the twCRPS with threshold $\tau$ restricts the evaluation of this integral only to thresholds of interest, over $\tau$. Evaluation of a predictive CDF $F$ using the twCRPS with threshold $\tau$ corresponds to evaluating the censored CDF $\tilde F_\tau$, censored at $\tau$ using the classical CRPS, formally:
\begin{lemma}
\label{lemma_sam_allen}
    The threshold-weighted CRPS at threshold $\tau$ of the distribution $F$ is equal to the CRPS of the censored distribution $\tilde F_\tau$ with censored observation $\tilde y$:
    \begin{align}
        \twcrps_\tau (F, y) = \crps(\tilde F_\tau, \tilde y)
\end{align}
where $\tilde y := v_\tau(y) = \max(y, \tau)$ and $\tilde F_\tau$ is the CDF censored at $\tau$:
    \begin{align}
    \tilde F_\tau(x) = \begin{cases}
        0 &\text{ if } x < \tau \\
        F(x) &\text{ if } x \geq \tau
    \end{cases}.
\end{align}
\end{lemma}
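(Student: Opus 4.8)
The plan is to unfold both sides into their defining integrals and show that, after splitting the real line at $\tau$, the integrands coincide. Writing the right-hand side via the definition of the CRPS (eq.~\ref{CRPS}) with $\tilde y = v_\tau(y) = \max(y,\tau)$ and splitting the domain of integration at $\tau$,
\begin{align*}
\crps(\tilde F_\tau, \tilde y) = \int_{-\infty}^{\tau} \bigl[\tilde F_\tau(z) - \mathbb{1}\{z \geq \tilde y\}\bigr]^2 dz + \int_{\tau}^{\infty} \bigl[\tilde F_\tau(z) - \mathbb{1}\{z \geq \tilde y\}\bigr]^2 dz.
\end{align*}

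First I would dispatch the integral over $(-\infty,\tau)$. For $z < \tau$ the censored CDF satisfies $\tilde F_\tau(z) = 0$ by definition, and since $\tilde y = \max(y,\tau) \geq \tau > z$ we also have $\mathbb{1}\{z \geq \tilde y\} = 0$; hence the integrand vanishes identically on this range and the first term is zero. Next I would identify the integral over $[\tau,\infty)$ with the twCRPS: on this range $\tilde F_\tau(z) = F(z)$, so it only remains to verify that $\mathbb{1}\{z \geq \tilde y\} = \mathbb{1}\{z \geq y\}$ whenever $z \geq \tau$. Substituting this, the second term becomes $\int_\tau^\infty \bigl[F(z) - \mathbb{1}\{z \geq y\}\bigr]^2 dz = \twcrps_\tau(F,y)$ by eq.~\ref{eq:twcrps}, which is the claimed identity.

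The only place requiring any care — and the closest thing to an obstacle, though it is routine — is that last indicator identity on $\{z \geq \tau\}$, which needs a short case distinction. If $y > \tau$ then $\tilde y = y$ and the two step functions are literally the same. If $y \leq \tau$ then $\tilde y = \tau$, and for every $z \geq \tau$ the left indicator equals $1$ because $z \geq \tau = \tilde y$ while the right indicator equals $1$ because $z \geq \tau \geq y$, so they again agree. The conceptual content is simply that censoring the observation to $\tilde y = \max(y,\tau)$ is exactly what keeps the step function associated with $\tilde y$ in lockstep with that of $y$ over precisely the region on which the twCRPS integrates, while the censoring of $F$ kills the integrand below $\tau$. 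No continuity or integrability assumptions on $F$ beyond those already implicit in the CRPS being finite are needed, so the argument is a few lines of bookkeeping once the integral is split at $\tau$.
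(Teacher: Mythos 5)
Your proof is correct and follows essentially the same route as the paper's: both unfold $\crps(\tilde F_\tau, v_\tau(y))$ via the defining integral, observe that the integrand vanishes on $(-\infty,\tau)$ because $\tilde F_\tau$ and the indicator are both zero there, and identify the remaining integral over $[\tau,\infty)$ with $\twcrps_\tau(F,y)$ using $\mathbb{1}\{z \geq v_\tau(y)\} = \mathbb{1}\{z \geq y\}$ on that range. You simply spell out the case distinction for the indicator identity that the paper leaves implicit.
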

This is alluded to already in \citet{allen_weighted_2023} and implemented in the scoringRules package \citep{allen_weighted_2024}. The proof is given in the Appendix. Thus intuitively when using the twCRPS a forecaster effectively uses the full CRPS to evaluate the distribution $\tilde{F}_\tau$ that has all probability mass below the threshold collapsed onto the threshold itself, whilst remaining the same above the threshold. That way the exact form of the distribution below $\tau$ only influences the score via the probability mass for exceeding or falling below the threshold.\\

Closed-form expressions of the twCRPS exist for a number of distributions, for example \citet{allen_incorporating_2021} derive the twCRPS for the truncated logistic and \citet{wessel_lead-time-continuous_2024} the twCRPS for the truncated normal distribution. To facilitate application of the twCRPS in future work the authors have collected from the literature and manually derived a number of closed-form twCRPS expressions for many often-used predictive distributions: the normal, logistic, Laplace, Student's t, gamma, log-normal, log-logistic, generalized Pareto, exponential and uniform distributions as well as censored and truncated versions of these. These expressions, together with a number of different characterizations of the twCRPS, can be found in the Appendix.\\

The twCRPS also admits a formulation as an expectation \citep{taillardat_evaluating_2023, allen_weighted_2023}:
\begin{align}
    \label{twcrps_expectation}
    \twcrps_\tau (F, y) = \mathbb{E}_{X \sim F} |v_\tau(X)-v_\tau(y)| - \frac{1}{2} \mathbb{E}_{X, X' \sim F} |v_\tau(X)-v_\tau(X')|,
\end{align}
where $v_\tau = \max(y, \tau)$ as above. The formulation of the twCRPS as an expectation can be used to approximate it via samples from a predictive distribution, i.e. using the estimator
\begin{align}
    \label{eq:sample_twcrps}
    \widehat{\twcrps_\tau} (F, y) = \frac{1}{N} \sum_{i = 1}^N |v_\tau(X_i)-v_\tau(y)| - \frac{1}{2 N^2} \sum_{i, j = 1}^N |v_\tau(X_i)-v_\tau(X_j)|,
\end{align}
where $X_1, \dots, X_N \sim F$ are samples from the predictive CDF. This can be useful for example in cases where no closed-form expression is available.\\

Although the twCRPS has been frequently used for evaluation of probabilistic forecasting models for extremes, it has to this date not been used for training (post-processing) models in the literature or operations. However, given that weighted and threshold-weighted scores are proper -- something that section \ref{sec:motivation} showed is important for model training -- and the relevance of reliable forecasts for extremes, we will study their usage for the training of EMOS post-processing models for wind speed, for both truncated normal and truncated logistic EMOS models. \\

When comparing different models we will frequently use skill scores in the next section. Given forecasts $F_1, \dots, F_N$, reference forecasts $F^{\text{ref}}_1, \dots F^{\text{ref}}_N$ and observations $y_1, \dots, y_N$ then a skill score in percent is given as:
\begin{align*}
    \text{Skill } (\%) = 100\cdot\left(1- \frac{\sum_{i=1}^N \wcrps(F_i,y_i)}{\sum_{i=1}^N\wcrps(F^{\text{ref}}_i, y_i)}\right),
\end{align*}
for any CRPS weighting function $w$. It thus corresponds to the percentage improvement in the score over the reference forecasts.\\

\section{Results: training with weighted scoring rules}

\label{sec:train}
We study the usage of the weighted CRPS for the training of post-processing models, to improve probabilistic predictions of extreme wind speeds. The weighted CRPS is a proper scoring rule, thus using it for evaluation and training will favour calibrated-sharp forecasts. We focus on the canonical threshold weighting due to the availability of closed-form expressions, which helps with the optimization. We consider training EMOS models using the following proper scores:
\begin{itemize}
    \item CRPS,
    \item Negative log likelihood,
    \item twCRPS with a threshold $\tau$.
\end{itemize}
Here the CRPS and log-likelihood models serve as baselines, whilst the twCRPS puts special emphasis on values above a threshold $\tau$. For optimizing the score functions we use the closed-form expressions for the CRPS and twCRPS for the truncated normal and truncated logistic distributions (see Appendix \ref{sec:closed_form}). We also tried optimization using the sample twCRPS estimator (equation \ref{eq:sample_twcrps}) and samples from the predictive distributions, but this proved slower and much less robust than optimization using the closed-form expressions.\\

\subsection{Effect of weighted training}
\label{sec:first_results}

We consider thresholds $\tau$ defined as the location-wise 80th and 90th percentile of the training dataset and train using the scores in section \ref{sec:train}. In practical applications most likely absolute thresholds defined by user needs will be of relevance, however, to show the general applicability of the method we use percentile based thresholds here as in \citet{sharpe_how_2018, sharpe_verification_2019, sharpe_colourful_2022}. Table \ref{tab:percentiles} in the Appendix shows the 80th and 90th percentile threshold in each cluster of stations. We report skill scores relative to the model trained using CRPS (``CRPS-trained model").\\

\begin{figure}[ht]
    \centering
     \includegraphics[width=\textwidth]{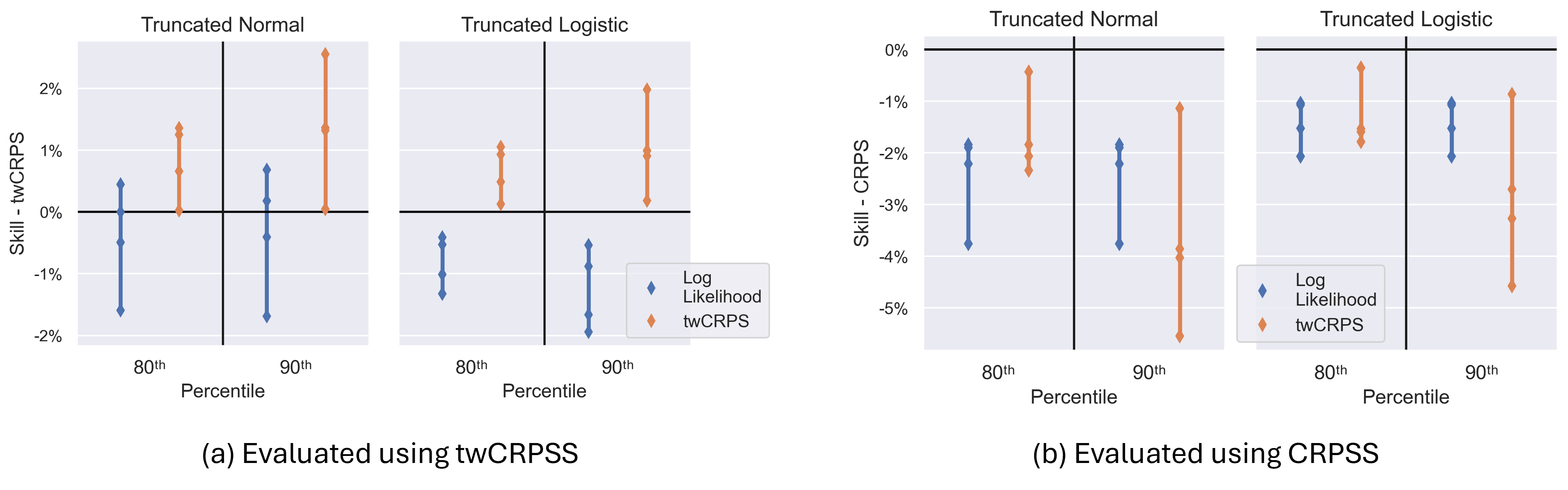}
    \caption{Threshold-weighted CRPS skill score (left, a) and CRPS skill score (right, b), both relative to the CRPS-trained model, for wind speed EMOS using truncated normal and truncated logistic distributions. Models are trained using log likelihood, and twCRPS. Thresholds for training and evaluation of the twCRPS are defined as locationwise 80th and 90th percentile of the training dataset. The diamonds correspond to the results within each cluster and the bands to the spread.}
    \label{fig:twcrps_crps}
\end{figure}

Figure \ref{fig:twcrps_crps}a shows for all models the threshold-weighted CRPS skill score over the CRPS-trained model evaluated on the test set, for the 80th and 90th locationwise percentile thresholds (used for both training and evaluation) for both truncated normal and truncated logistic predictive distributions. Training using a twCRPS leads to improved predictions of extremes, as measured by the twCRPS on the test set. These improvements average around $0.8$\% for the 80th and around $1.3$\% for the 90th percentile threshold and up to $\sim 1.4$\% and $\sim 2.5$\% respectively. The twCRPS improvements also show in terms of Brier skill score (see Table \ref{bss} in the Appendix for the Brier skill score for both distributions for a variety of evaluation thresholds). The strongest improvements in terms of twCRPS are in cluster 1 corresponding to inland locations, followed by clusters 3 and 2 corresponding to coastal and more exposed inland locations, with cluster 4 (more exposed coastal locations) last. The log-likelihood training performs worse in terms of CRPS and twCRPS than the CRPS-trained model which is in line with previous studies \citep{gneiting_calibrated_2005, gebetsberger_estimation_2018}.\\

The improvements in terms of twCRPS however seem to be linked to a loss in CRPS as can be seen in Figure \ref{fig:twcrps_crps}b which shows the CRPS skill over the CRPS-trained model, evaluated on the test set. The twCRPS-trained model loses CRPS skill, compared to the CRPS-trained model for both the 80th and even stronger for the 90th percentile threshold. Performance at the latter threshold improves more in terms of twCRPS, but also decreases more in terms of CRPS, irrespective of the predictive distribution. There seems to be a trade-off between CRPS and twCRPS performance. We investigate this more closely in Figure \ref{fig:trade_off}, which shows the twCRPS skill improvement against the CRPS skill loss for all clusters and both truncated normal and truncated logistic, for 80th and 90th percentile thresholds. There seems to be a near-linear association between CRPS skill loss and twCRPS skill improvement. Larger twCRPS improvements also seem to lead to larger CRPS losses, which are both associated with the larger of the two thresholds. In section \ref{sec:combinations} we will study ways of managing the trade-off  between CRPS and twCRPS performance. \\

\begin{figure}[ht]
    \centering
     \includegraphics[width=0.6\textwidth]{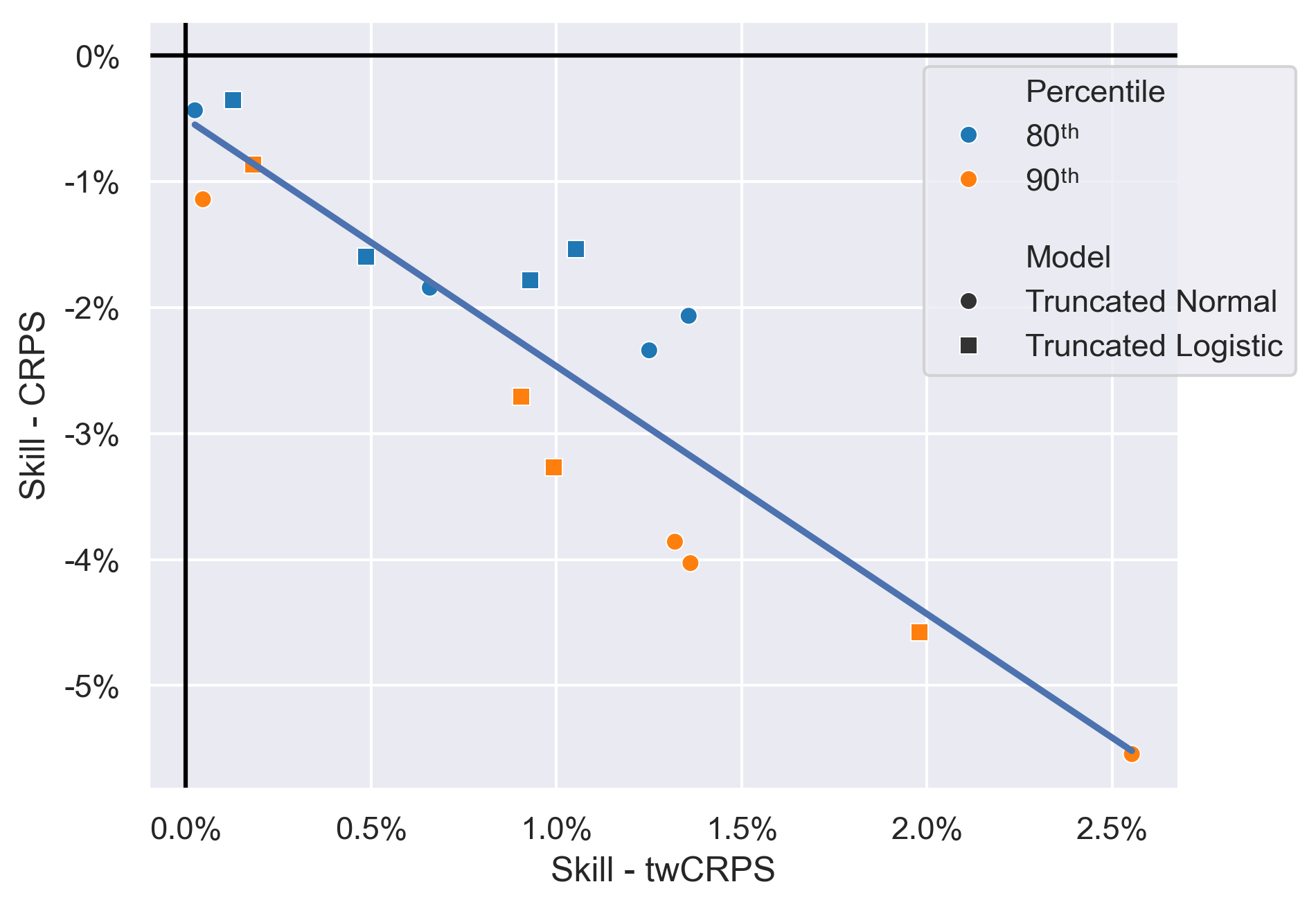}
    \caption{CRPS skill loss against twCRPS skill gain,  both relative to the CRPS-trained model, for all clusters, truncated normal (points) and truncated logistic (squares) predictive distribution and 80th (blue) and 90th (orange) percentile training and evaluation thresholds. The line corresponds to the line of best fit.}
    \label{fig:trade_off}
\end{figure}

We also investigate the effect of twCRPS based training onto extremal calibration, focusing on the truncated normal distribution and 90th percentile threshold, although results for the other models look similar. Figure \ref{fig:extr_calibration}a shows QQ-plots of the conditional PIT values against quantiles from a uniform distribution for CRPS-, log-likelihood and twCRPS-based training. Under probabilistic calibration of the threshold exceedance distribution, these quantiles should be close to the diagonal. twCRPS-based training seems to improve extremal calibration, including in cluster 4 which has the most extreme wind speeds but in which only a small twCRPS score improvement can be found. A similar picture appears when considering the tail miscalibration TMCB (equation \ref{eq:TMCB}). This is shown in Figure \ref{fig:extr_calibration}b as a function of the evaluation threshold. For evaluation thresholds close to the 90th percentile training threshold the twCRPS-trained model performs similarly as the CRPS and log-likelihood trained one. However for higher thresholds it shows much better tail calibration. This is similar also for the 80th percentile model, which shows substantially improved calibration over the 90th percentile (not shown). The improvements in calibration thus seem to hold for a number of thresholds.\\

In summary, by optimizing the twCRPS one can improve extreme event performance of forecasts, as measured by twCRPS, Brier Score and extremal calibration. This however comes at some cost in terms of CRPS -- measuring the forecast performance for the whole distribution and the distribution body. The trade-off between CRPS and twCRPS performance can be seen as a classic distribution body-tail trade-off where an improved modelling of a distribution's tails might come at the cost of capturing the distribution's body and vice-versa. This seems to hold for training thresholds defined as 80th and 90th percentile of the distribution, however, in the next section we will study the effect of varying the training threshold.

\begin{figure}[ht]
    \centering
     \includegraphics[width=\textwidth]{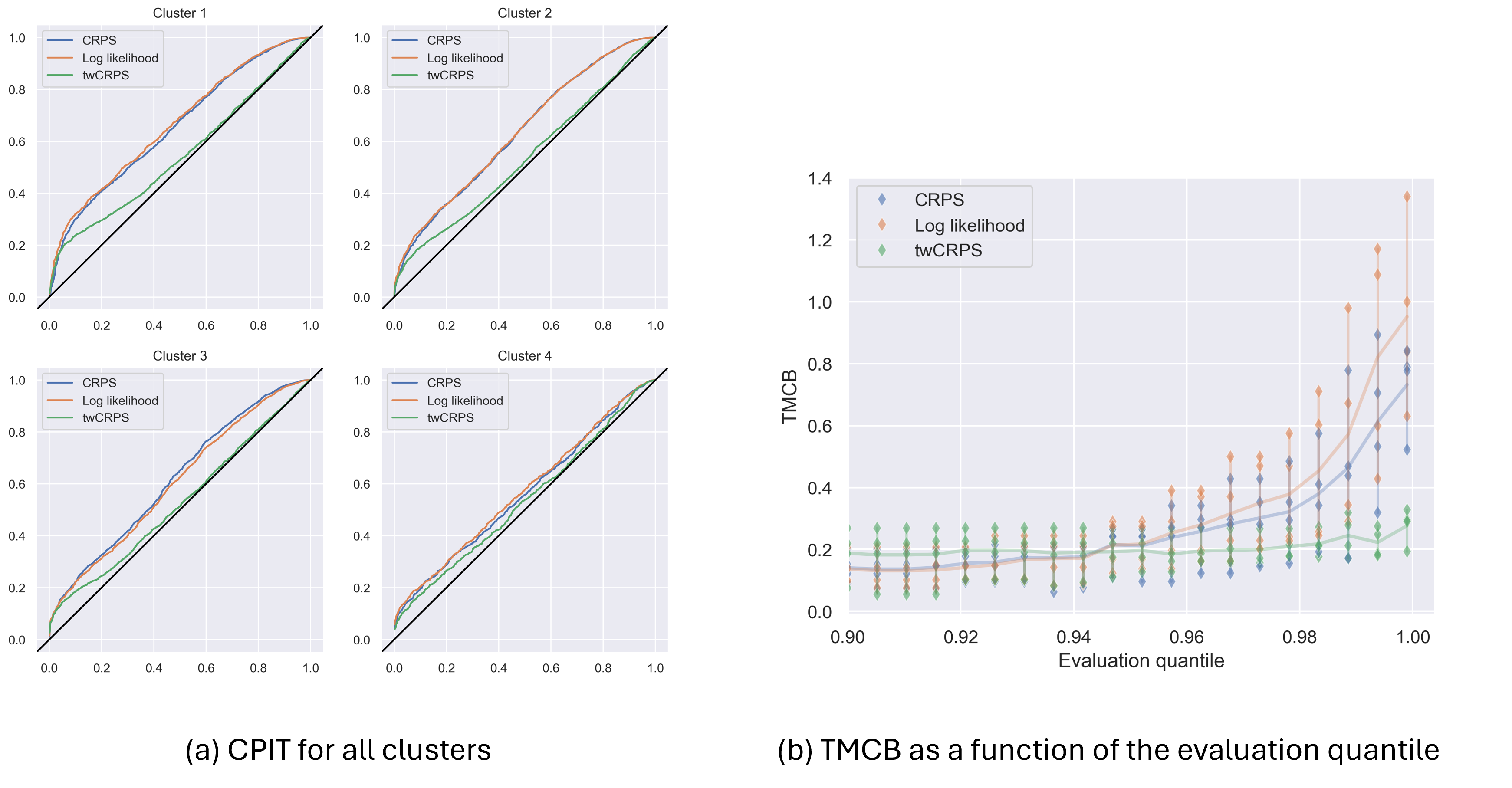}
    \caption{Extremal calibration: a) By cluster: QQ plots of quantiles from a uniform distribution against of conditional PIT values for CRPS-, log-likelihood- and twCRPS-trained models, 90th percentile, truncated normal predictive distribution. b) TMCB as a function of the evaluation threshold for the CRPS-, log-likelihood and twCRPS-trained models, 90th percentile model, truncated normal predictive distribution. The diamonds represent the results in each cluster, whilst the line corresponds to the mean. }
    \label{fig:extr_calibration}
\end{figure}

\subsection{Effect of the training threshold}
\label{sec:threshold}
We study the effect of varying the threshold for training using the twCRPS and analyse this for the truncated normal case. Results for the truncated logistic distribution are similar. We train post-processing models using the twCRPS with thresholds as 60th, 80th, 90th and 95th percentile and evaluate against the twCRPS with a range of different thresholds between the 0th (corresponding to the unweighted CRPS) to 99th percentile of training dataset. Figure \ref{fig:varying_threshold} shows by cluster the twCRPS skill (relative to the CRPS-trained model) as a function of the evaluation threshold (in m/s) for all four trained models. Results in terms of quantile can be found in Figure \ref{fig:varying_threshold_relative}, and for the 90th percentile model a comparison against the raw ensemble can be seen in Figure \ref{fig:comp_raw_ensemble}, both in the Appendix. The model trained for a certain threshold generally seems to perform well at this threshold and at thresholds slightly lower and higher than the one it was trained for. All models experience a drop in performance near the 99th percentile, which generally corresponds to very high wind speed values which naturally will be very difficult to predict. Training with a twCRPS for a high quantile threshold also leads to skill loss at lower quantiles and it seems the higher the training threshold / quantile the worse the twCRPS skill loss for lower evaluation thresholds / quantiles also becomes, manifesting in increasing slope of the skill curves. This means that model performance can be improved for many thresholds by training with a twCRPS, but should the distribution body performance be important it might be sensible to use lower thresholds. In these cases however the skill improvements for high quantiles are also more modest. Some improvement can also be seen in terms of Brier score (see Figure \ref{fig:brier_score_training_quantiles} in the Appendix), at least for cluster 1, 2, and 3, although results are more varied.

\begin{figure}[h!]
        \centering
        \includegraphics[width=0.6\textwidth]{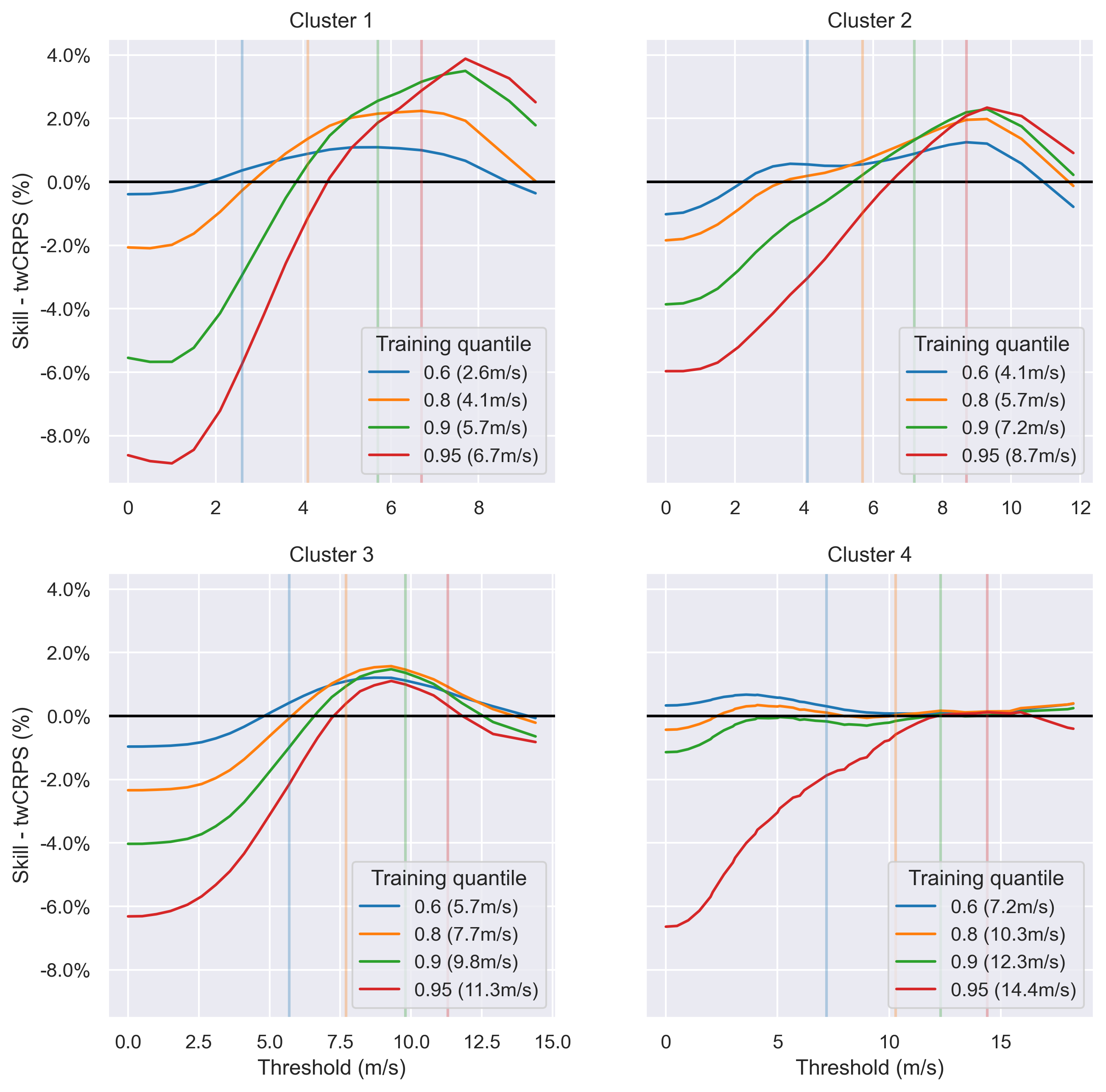}
        \caption{twCRPS skill score (over the CRPS-trained model) by cluster as a function of the evaluation threshold for twCRPS-trained models trained using different thresholds. Thresholds on the x-axis correspond to wind speeds between the 0th and 99th percentile of the training distribution. The vertical lines indicate the thresholds for which models were trained.}
        \label{fig:varying_threshold}
\end{figure}

\subsection{Effect of training datasize}
\label{sec:datasize}

We study the effect of the training datasize for training EMOS models using the twCRPS. For this we vary the number of stations that twCRPS-based EMOS models are trained on, allowing us to probe the amount of data needed. We define subsets of N stations by sampling at random with replacement from the dataset of 124 stations, ignoring the previous cluster assignments. EMOS models are then trained on the dataset of stations within this subset/cluster using CRPS and twCRPS, in order to calculate the twCRPS skill of the twCRPS-model on the subset-specific test set. This process is bootstrapped 50 times to get an indication of the uncertainty. We focus on the truncated normal predictive distribution for 80th and 90th percentile models and for models trained on the whole training data (one year and 9 months), as well as models trained on only one year of training data. Figure \ref{fig:training_data_dep} visualizes the skill score as a function of the number of locations included in the training set.\\

The training datasize seems to have substantial influence on whether twCRPS-based training is successful. Consistent positive twCRPS skill on the test set can only be found when enough locations are included in each subset/cluster, for both percentiles and both the full and half training data. More locations are needed for the 90th percentile model than for the 80th percentile model and for the models using half the training data compared to the models trained on the full training data. Little benefit in terms of twCRPS skill can be seen for training on individual stations in this dataset, however this might be different if more training data per station would be available. Interestingly, even if there is no improvement in terms of twCRPS skill for smaller clusters (including locationwise training), there is generally still an improvement in terms of extremal calibration (not shown).\\

This dependence on training datasize might be due to multiple reasons. Extensive training data might be needed to infer tail properties well. Also, \citet{lerch_forecasters_2017} report low discriminative power of the twCRPS in Diebold-Mariano test. Thus, in order for having adequate power to distinguish well between candidate EMOS models, large amounts of data might be needed. The training data requirements also seem to vary with the thresholds, with successful training of twCRPS models for larger thresholds needing more training data than for lower thresholds. 

\begin{figure}[h!]
        \centering
        \includegraphics[width=0.6\textwidth]{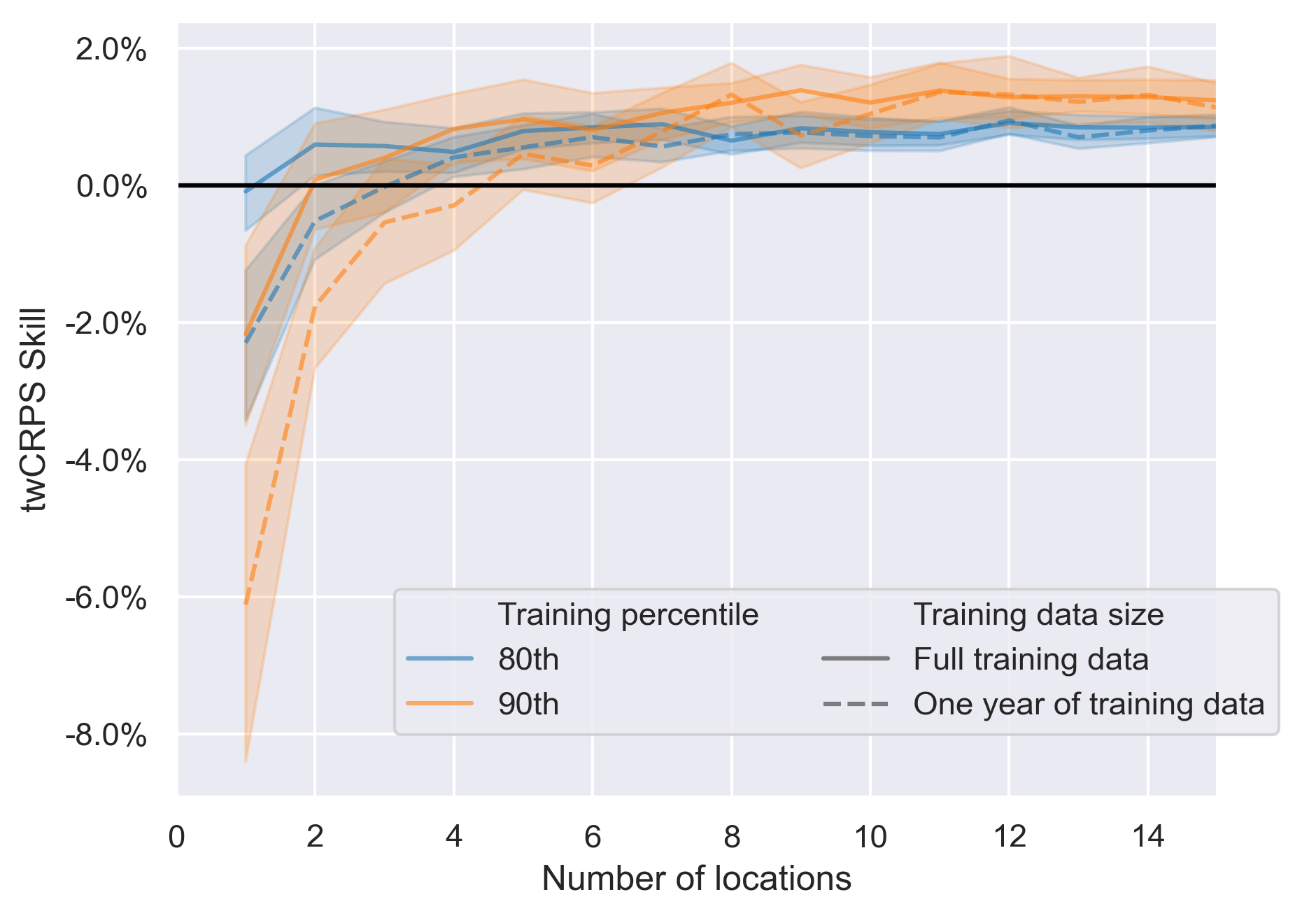}
        \caption{twCRPS skill score (over the CRPS-trained model) as a function of the number of locations included in the training set, for models trained on the full training dataset, as well as on only one year of training data. The bands represent 95\% confidence intervals obtained through bootstrapping, whilst the lines correspond to the mean score. Truncated normal predictive distribution and 80th as well as 90th percentile thresholds.}
        \label{fig:training_data_dep}
\end{figure}

\section{Forecast combinations}
\label{sec:combinations}
In section \ref{sec:first_results} we found a trade-off between twCRPS improvement and CRPS loss, when training using the twCRPS. A forecast distribution, it seems, can be very good for extremes but sub-optimal for values in the body of the climatological distribution. This can pose challenges for application. In the following we study two ways of combining the CRPS and twCRPS objectives to enable forecasters to balance the trade-off and adjust CRPS and twCRPS performance to their preferences. First we consider training EMOS models using weighted score combinations: 
\begin{equation}
\label{eq:weighted_training}
    \crps(F, y) + \gamma \cdot \twcrps_\tau(F,y).
\end{equation}
These combine training using a CRPS and twCRPS and correspond to a weighted CRPS with weighting function $w(x) = 1 + \gamma \cdot \mathbb{1}(x \geq \tau)$, originally proposed by \citet{vannitsem_2018_chapter_6} for the assessment of forecasts. Other, non-threshold-based, weighting functions might also be usable to balance body-tail forecast performance, however, indicator weighting has the advantage of closed-form solutions being available. In a second step we consider linear pooling between the CRPS-trained and twCRPS-trained model, to control twCRPS improvement and CRPS loss. We focus on the 90th percentile threshold for both truncated normal and truncated logistic distributions.

\subsection{Weighted combinations}

We consider training using weighted scores of the form given in equation \ref{eq:weighted_training}. In Figure \ref{fig:gamma} we show the CRPS skill over the CRPS-trained model (blue) and twCRPS skill over the twCRPS-trained model (orange) for different values of $\gamma$. For $\gamma = 0$ the model corresponds to the CRPS-trained model whilst for growing $\gamma$ the twCRPS has stronger influence. \\

Varying $\gamma$ seems to allow users to control the CRPS loss and twCRPS improvement, with as $\gamma$ grows the resulting model approaches the twCRPS improvement of the twCRPS-trained model and its CRPS loss. Interestingly however, even though the twCRPS for the model with $\gamma = 20$ is nearly at the level of the twCRPS-trained model (at least for the truncated normal distribution), the CRPS skill loss is still much better with only -1.58\% CRPS skill loss for the $\gamma = 20$ model against -3.64\% for the twCRPS-trained models in the truncated normal case and only -1.75\% against -2.85\% in the truncated logistic case (see section \ref{sec:first_results}). For cluster 4, as well as for larger values of $\gamma$ cluster 3 (and 2 in the truncated logistic cases) weighted training can even improve upon the ``pure" twCRPS-model. Weighted training essentially leads to a more robust model, avoiding some cases where CRPS performance falls off strongly through twCRPS training, whilst still capturing much of the twCRPS improvement. This can also be of importance when training with very high thresholds, for example non-location-specific thresholds. When there are little to no observations over the threshold, twCRPS-based training can lead to highly mis-specified EMOS models, whilst weighted training allows to strike a compromise between the twCRPS and CRPS objectives (not shown). \\

The parameter $\gamma$  presents a ``hyperparameter" that can be chosen by users to tailor a model's CRPS and twCRPS performance for specific applications. However, after training, changing the balance of twCRPS improvement and CRPS loss requires retraining of the post-processing model. Thus in the following we will consider direct combination of CRPS-trained and twCRPS-trained models.

\begin{figure}[h!]
        \centering
        \includegraphics[width=0.9\textwidth]{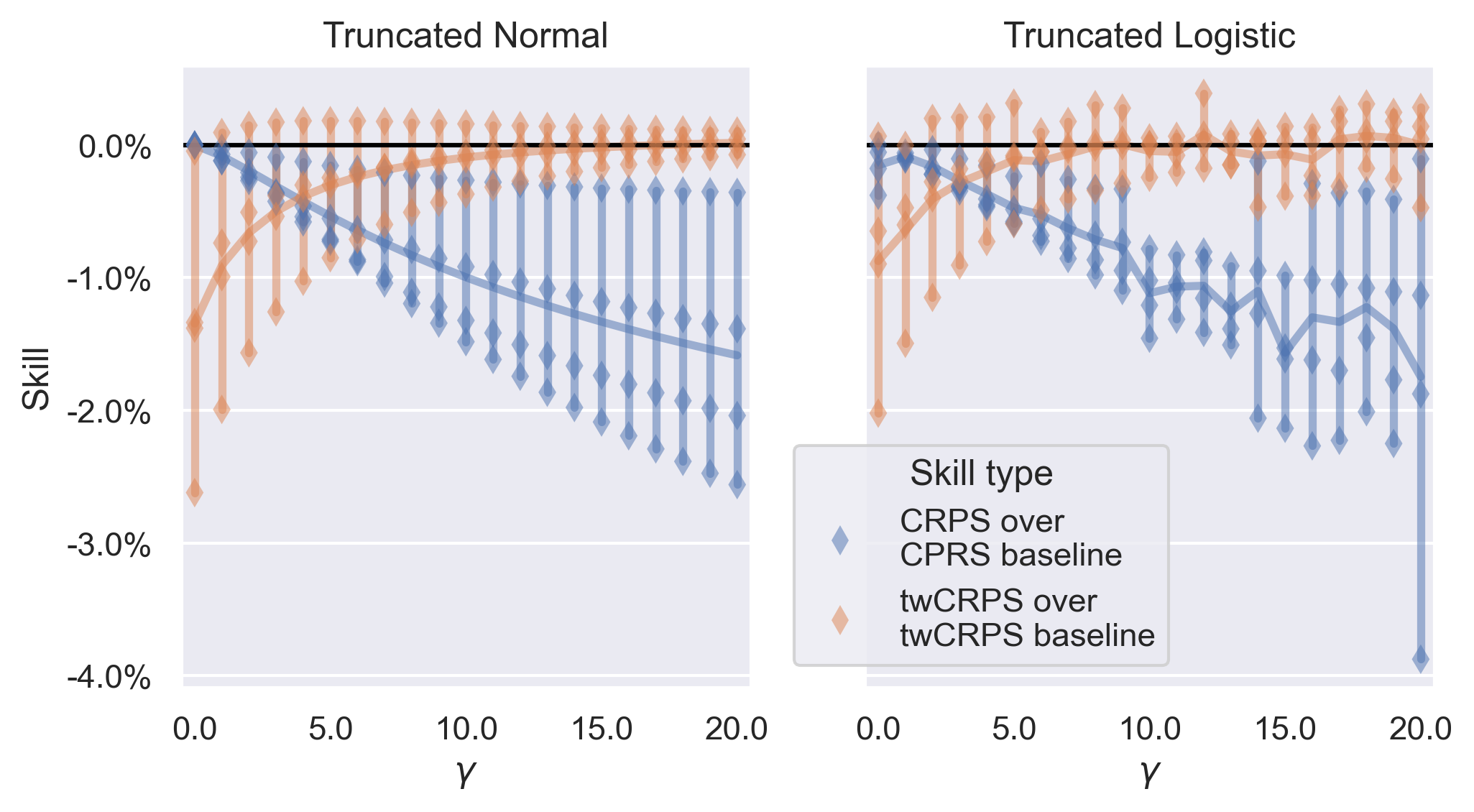}
        \caption{Weighted training: twCRPS skill score over the twCRPS-trained model (orange) and CRPS skill score over the CRPS-trained model (blue) as a function of $\gamma$ for both truncated normal (left) and truncated logistic (right) predictive distributions. Each diamond represents one cluster, whilst the lines correspond to the mean score.}
        \label{fig:gamma}
\end{figure}

\subsection{Linear pool}

The $\gamma$ parameter seems to ``interpolate" between the CRPS-trained and twCRPS-trained model. We now study combining these models directly using their predictive CDFs. More concretely we study the following convex combination of the predictive distribution functions:
\begin{align}
\label{eq:linear_pool}
    F_{\text{pool}} = \lambda F_\crps + (1-\lambda)F_\twcrps 
\end{align}
This is also known as linear pool \citep{gneiting_combining_2013, baran_combining_2018} and the canonical case $\lambda = 1/2$ corresponds to simple CDF averaging. The linear pool as distributional mixture provides a more flexible predictive distribution. It has some drawbacks, for example \citet{gneiting_combining_2013} show that it necessarily increases dispersion. More advanced versions such as the spread-adjusted linear pool have been proposed, however, we defer their study for future work.\\

We study the twCRPS skill over the twCRPS-trained model and the CRPS skill over the CRPS-trained model, whilst varying $\lambda$ in Figure \ref{fig:lambda}. The $\lambda$ parameter allows to strike a balance between the CRPS (being best for the CRPS-trained model and worst for the twCRPS-trained model) and the twCRPS (vice-versa). The curves are somewhat similar to the ones obtained by weighted training. As one can see the twCRPS improvements and CRPS losses have however very different ``steepness", meaning that for example a model with $\lambda = 0.6$ still has most of the twCRPS improvements, but controls the majority of the CRPS loss.\\

\begin{figure}[h!]
        \centering
        \includegraphics[width=0.9\textwidth]{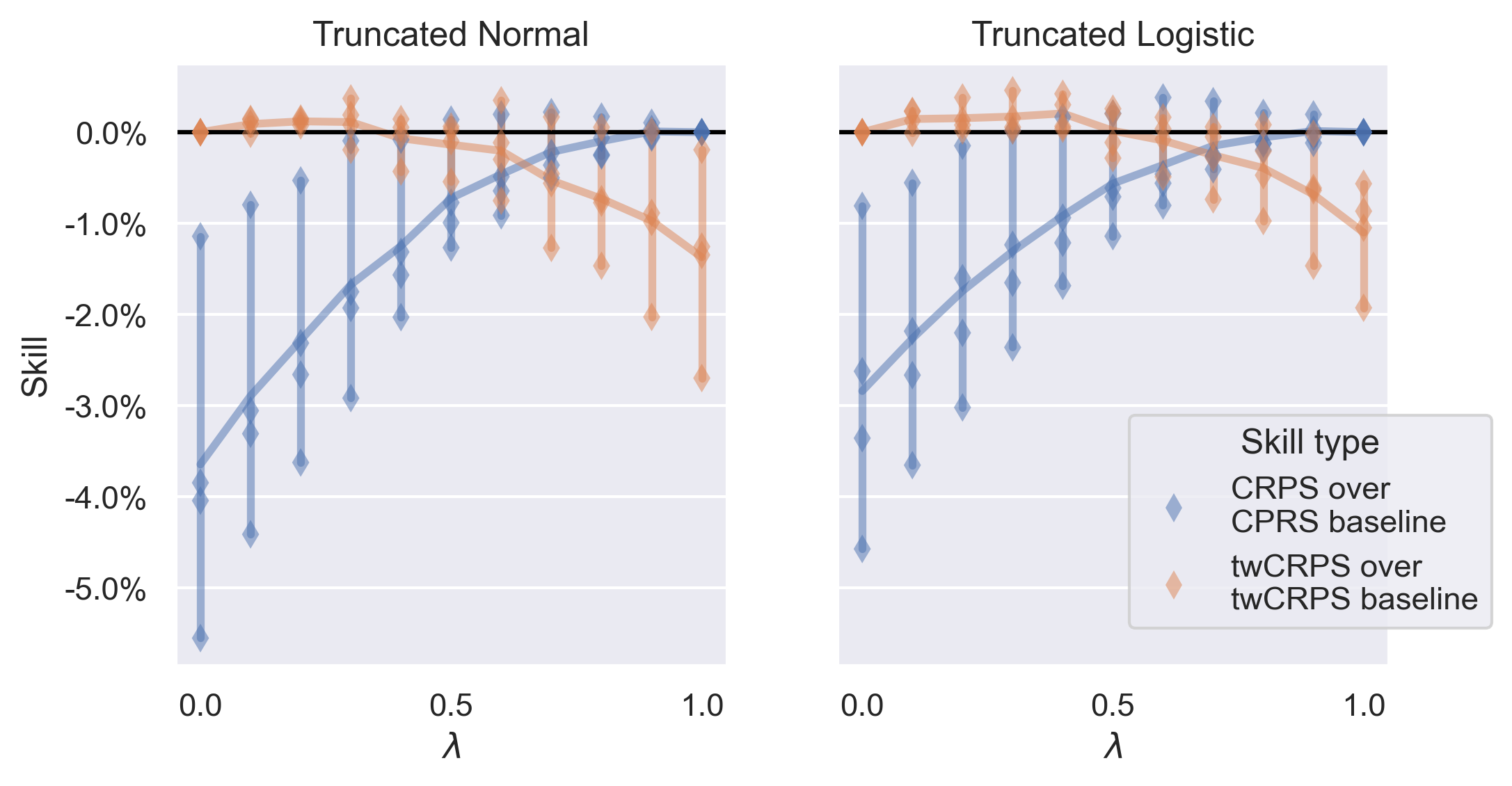}
        \caption{Linear pool: twCRPS skill score over the twCRPS-trained model (orange) and CRPS skill score over the CRPS-trained model (blue) as a function of $\lambda$ for both truncated normal (left) and truncated logistic (right) distributions. Each diamond represents one cluster, whilst the lines correspond to the mean score.}
        \label{fig:lambda}
\end{figure}

We investigate the example case $\lambda = 0.6$ for the truncated normal distribution. Figure \ref{fig:combination} shows the twCRPS and CRPS skill (relative to the CRPS-trained model) for both the twCRPS-trained and the linear pool model. As one can see the linear pool model keeps much of the twCRPS improvement, but controls the CRPS loss much stronger, with the spread across clusters being much smaller. Especially, for the CRPS performance, this will in part be due to the increased dispersion when linearly pooling forecasts. Pooling CRPS-trained and twCRPS-trained models seems to be able to somewhat alleviate the CRPS-twCRPS trade-off.  \\

\begin{figure}[h!]
        \centering
        \includegraphics[width=0.6\textwidth]{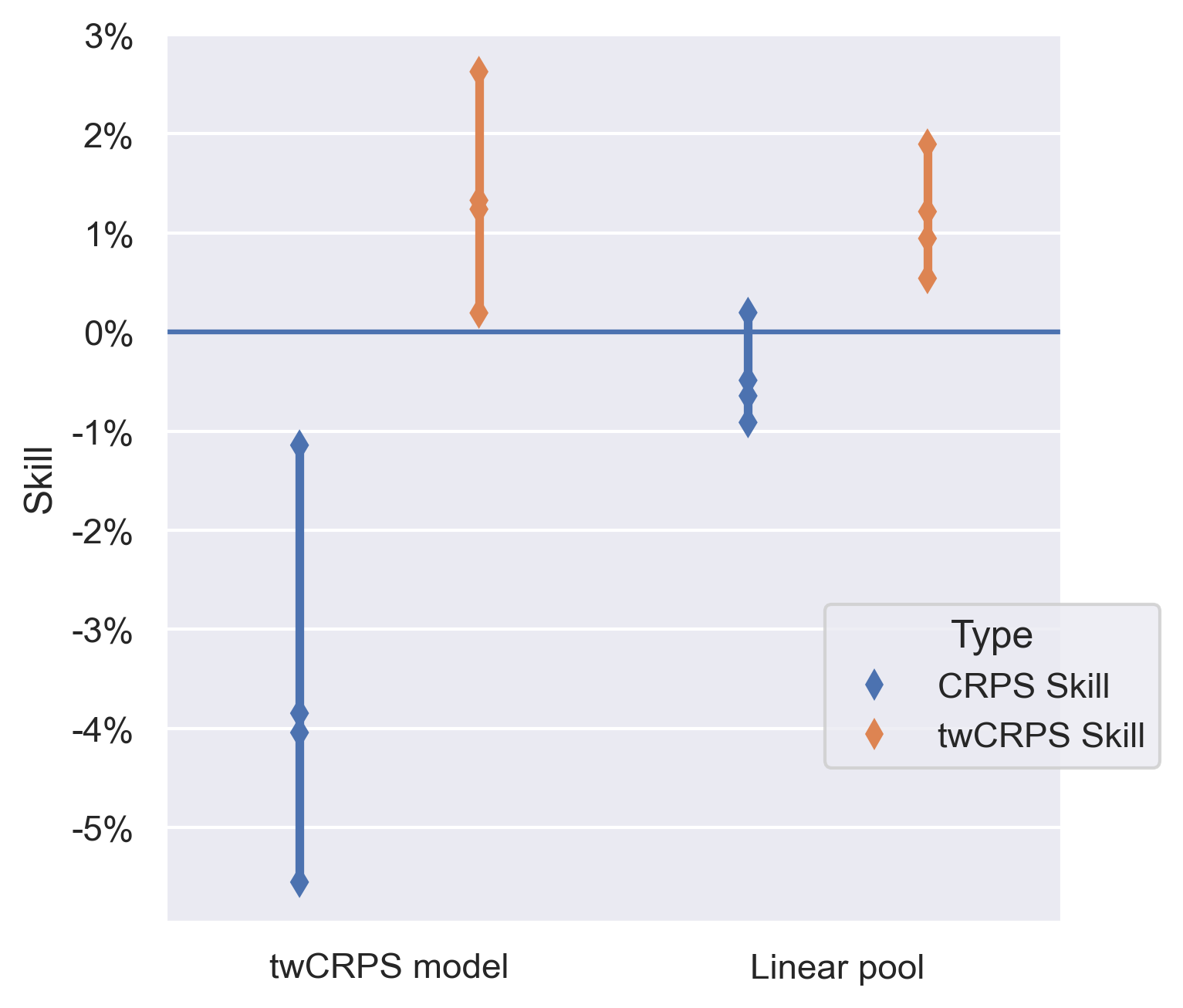}
        \caption{Linear pool: twCRPS skill score (orange) and CRPS skill score (blue), both relative to the CRPS-trained model, for the twCRPS and the linear pool model with $\lambda = 0.6$. Truncated normal predictive distribution, 90th percentile threshold.}
        \label{fig:combination}
\end{figure}

For different applications different choices of $\lambda$ will be relevant as users might be able to justify different CRPS losses. It might be possible to choose the $\lambda$ parameter in a data driven way, possibly based on the ensemble mean or external covariates such as weather patterns \citep{allen_accounting_2021, spuler_identifying_2024, spuler_learning_2025}. Simple experiments parametrizing $\lambda$ in terms of the distance of the ensemble mean to the threshold (a possible measure of the probability of exceedance), however did not show substantial skill over the ad-hoc choice of for example $\lambda = 0.6$.\\ 

Compared to the weighted training the linear pool directly interpolates between the two models, which means that for low values of $\lambda$ it is able to capture more of the twCRPS improvement, but also has much stronger CRPS loss. The latter is maximal with -1.58\% for $\gamma = 20$ for the weighted training, whilst reaching -3.64\% for $\lambda = 0$ in the linear pool, truncated normal case. By choosing adequate values of $\lambda$ and $\gamma$ however it is possible to construct models with very similar twCRPS and CRPS performance using both the linear pool and weighted training. Which one is preferable for applications might therefore depend on individual forecaster's considerations, most notably whether changing the twCRPS/CRPS balance might be required during application. This would need retraining in the case of weighted training, making the linear pool preferable. For choosing $\gamma$ or $ \lambda$ in applications there might be requirements on CRPS/twCRPS performance necessitating specific choices. Otherwise, we recommend an ad-hoc choice of a value that meets the objectives.

\section{Interpreting the training impact of the twCRPS}
\label{sec:synth_ex}
After experiments studying the training of post-processing models using the threshold-weighted CRPS we now consider a number of synthetic experiments to understand the training impact of the twCRPS better in an idealized system. We focus on inferring tail properties from simulated data and on extreme value distributions, which have been used for various post-processing applications \citep[see for example][]{baran_log-normal_2015, thorarinsdottir_assessing_2016}, to highlight how using a twCRPS can focus parameter inference on parameters controlling the distribution tail.\\

\subsection{Background: Tail structures and proper scoring rules}

\citet{taillardat_evaluating_2023} argue that the mean CRPS and mean weighted CRPS cannot distinguish between non-tail-equivalent forecasts. They show that for any random variable $Y$ and any $\nu > 0$ it is always possible to construct a non-tail-equivalent random variable $X$ (meaning a random variable with different tail index and thus differing tail behavior/return levels for extremes), such that:
\begin{align}
    \label{eq:wCRPS_nu}
    | \mathbb{E}_{y \sim Y}\left[ \wcrps(F_y, y) \right] - \mathbb{E}_{y \sim Y} \left[ \wcrps(F_x, y)\right]| \leq \nu
\end{align}
where $\wcrps$ is the weighted CRPS with any weighting function $w$.  \citet{brehmer_why_2019} generalise this and show that this is not only a property of the weighted CRPS, but rather all proper scoring rules fail to distinguish between tail equivalent forecasts in the sense of \citet{taillardat_evaluating_2023}. As \citet{brehmer_why_2019} conclude this casts doubt on the ability of proper scoring rules to distinguish between tail regimes. In practical applications, however, proper scores such as the log-score (negative log-likelihood) are often successfully used to fit distributions to extremes and to distinguish between tail structures. This is supported by the theory of M-estimation, of which scoring rule inference is a special case \citep{dawid_geometry_2007}, providing consistency guarantees on parameter estimates. Thus the authors of this work believe that the arguments presented in \citet{taillardat_evaluating_2023} and \citet{brehmer_why_2019} do not carry conclusive practical implications. This is especially as \citet{taillardat_evaluating_2023}  do not provide bounds on the constructed non-tail-equivalent random variables $X$ that show that these will influence statistical analysis in practice \footnote{More precisely \citet{taillardat_evaluating_2023} construct their non-tail-equivalent random variable as $X := Y \cdot \mathbb{1}(Y \leq u) + \left( Z + u \right) \cdot \mathbb{1}(Y > u)$, where $Z$ is a random variable with potentially very different tail behavior from $Y$. However, the authors give no ($\nu$-dependent) bounds on $u$. $u$ might therefore be very large and even unobservable in practice. Thus, although it is always possible to construct a non-tail-equivalent random variable it is not clear whether in practical applications $X$ will be markedly different from $Y$. Furthermore, the weighted CRPS in equation \ref{eq:wCRPS_nu} is assumed to be fixed. When using a twCRPS, however, the threshold can be adapted to the situation and can be set to $\tau > u$, which would effectively center inference onto $Z$ following Lemma \ref{lemma_sam_allen}.}. Nonetheless, we believe \citet{taillardat_evaluating_2023, brehmer_why_2019} highlight an interesting question about the ability of the weighted and especially threshold-weighted CRPS to distinguish between tail regimes. We believe that there is value in studying this question empirically, thus in the following we consider a number of synthetic examples estimating distributions with different tail indices from samples.


\subsection{Synthetic experiments}

We will consider the following mixture of generalized Pareto distribution ($GPD$) and normal distribution ($N$):
\begin{align}
\label{synth_model}
    0.4 \cdot N(\mu, 1) + 0.6 \cdot GPD(\xi),
\end{align}
where 
\begin{align*}
    F_{GPD}(x | \xi) = \begin{cases}
1 - \left(1+ \xi x\right)^{-1/\xi} & \text{for }\xi \neq 0, \\
1 - \exp \left(-x\right) & \text{for }\xi = 0.
\end{cases}
\end{align*}
In this mixture the $\mu$-parameter controls the location of the distribution body, whilst $\xi$ controls the tail structure. We fix the true $\mu = 2$ and $\xi = 0.5$. In the following, we will consider experiments estimating both parameters using CRPS and twCRPS from 1000 samples of the true model (``training samples"). We consider a grid of candidate values $\hat \mu, \hat \xi$ and for each of these we sample 250 values from the model in equation \ref{synth_model} and compute the sample average CRPS or twCRPS between those samples from the ``candidate model" and the 1000 samples from the true model (the ``training samples").\\

First we fix $\mu = 2$ and only estimate the $\xi$ parameter, by varying it between 0.25 and 0.75, shown in Figure \ref{fig:synth_ex1}. All three estimation methods, using CRPS, twCRPS with a threshold based on the 90th percentile and twCRPS with a threshold based on the 95th percentile of the training distribution, seem to be able to stably identify the true tail parameter $\xi = 0.5$. Interestingly however the twCRPS with 90th or 95th percentile thresholds, which following lemma \ref{lemma_sam_allen} effectively censor out 90\% / 95\% of the data seems to be just as stable for estimating $\xi$ as the CRPS. The loss curves do not seem substantially more ``noisy" and exhibit similar curvature across the domain of candidate $\xi$ values.\\

\begin{figure}[ht]
        \centering
        \includegraphics[width=\textwidth]{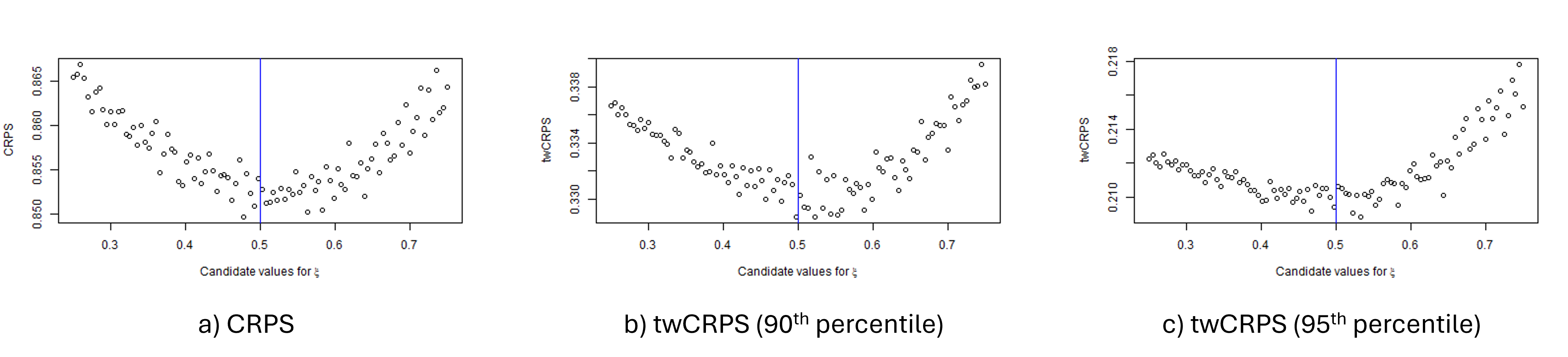}
        \caption{CRPS (a) and twCRPS for 90th (b) and 95th (c) percentile thresholds for different values of $\xi$, with the true $\xi = 0.5$ indicated by the blue vertical lines.}
    \label{fig:synth_ex1}
\end{figure}

In Figure \ref{fig:synth_ex2} we estimate both parameters $\mu$ and $\xi$ using the CRPS and twCRPS with a 90th percentile threshold. We plot the CRPS and twCRPS marginally, such that for each value of $\mu$ the CRPS/twCRPS for all candidate values of $\xi$ can be seen and vice-versa. We find that the CRPS is much sharper for the estimation of $\mu$ than it is for the estimation of $\xi$, where the true parameter can't be identified. This effect seems reversed for the twCRPS which is very inconclusive for $\mu$ estimation, but relatively sharp for the estimation of $\xi$. The twCRPS seems to focus the optimization on the parameter of interest, which controls the tail structure of the synthetic dataset, whilst the CRPS is concentrated on deviations in the body of the distribution. The type of proper score used for optimization can thus have an influence on the type of parameter focused during optimization. \\

\begin{figure}[ht]
        \centering
        \includegraphics[width=\textwidth]{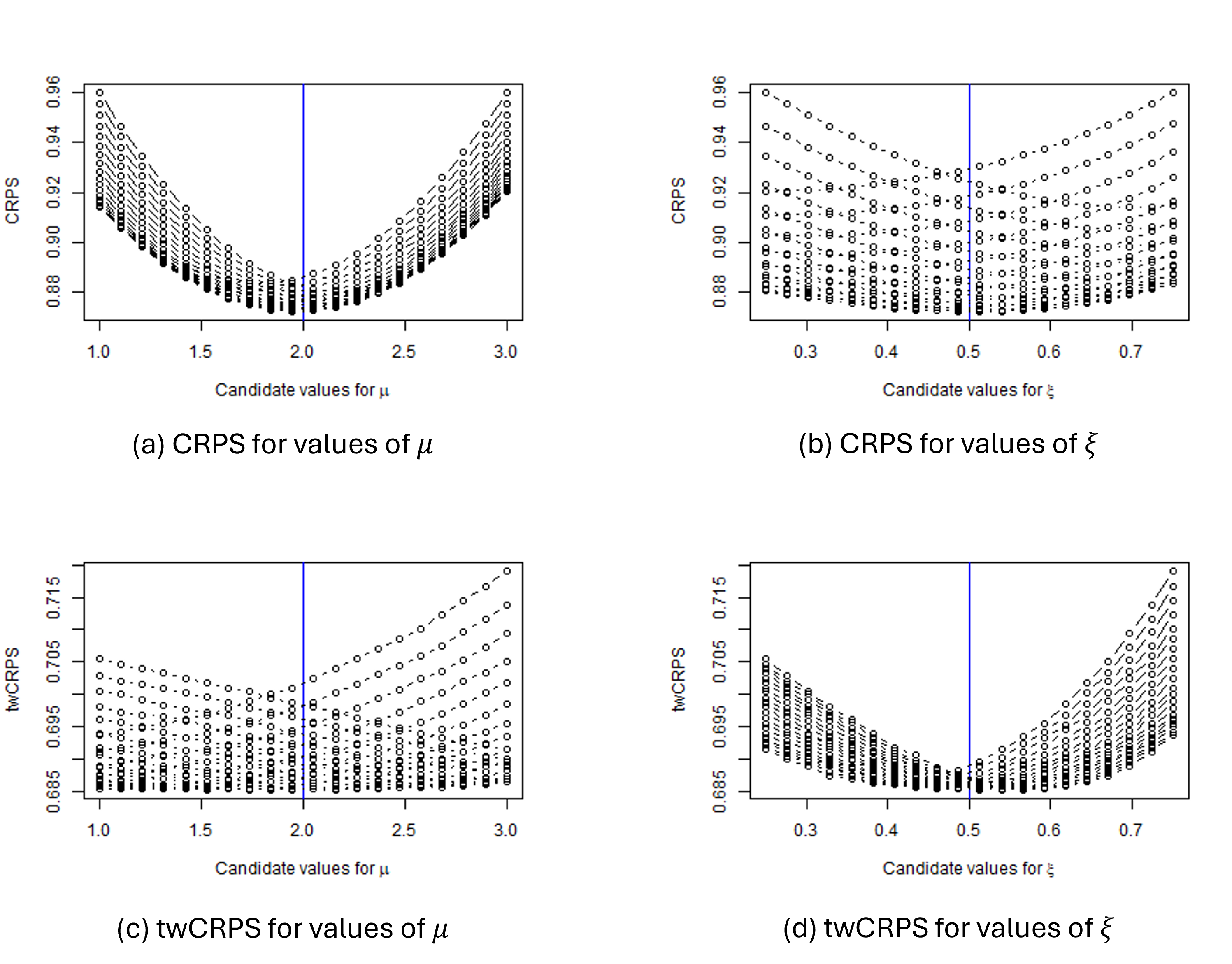}
        \caption{CRPS (top, a, b) and twCRPS with 90th percentile threshold (bottom, c, d) for different values of $\mu$ (left, a, c) and $\xi$ (right, b, d). The scatterpoints for each value of $\mu$ and $\xi$ correspond to all values of the respective other parameter, the lines connect the values of the other parameter being constant. The true $\mu$ and $\xi$ are indicated by the blue vertical lines. The same random seed is held constant for all samples for all candidate values of $\mu$ and $\xi$.}
    \label{fig:synth_ex2}
\end{figure}

The effect of the twCRPS of focusing inference on the tail structure of a distribution follows mathematically from lemma \ref{lemma_sam_allen}. By using the twCRPS with threshold $\tau$ one effectively evaluates the censored CDF using the CRPS, meaning that the body of the distribution only factors in via censoring (the probability mass at the threshold $\tau$ itself). This explains the sharp $\xi$ inference in Figure \ref{fig:synth_ex2}, whilst the $\mu$ inference is inconclusive. This process of using a CRPS for a censored distribution has some similarities with the usage of censored likelihood in extreme value theory \citep{huser_spacetime_2014} and it will be interesting to explore the usage of (weighted) proper scores for extreme value inference further in future work (see e.g. \citealt{yuen_crps_2014} for CRPS based inference in max stable processes and  \citealt{de_fondeville_high-dimensional_2018} for inference using the gradient scoring rule).\\

Finally, both the CRPS and twCRPS are proper scoring rules, which means that in expectation both should be minimal at the true model. In practical applications, however, the true model is rarely within the candidate model class, and when optimizing a score for model training one only computes the in-sample average score, not the expected one. Using a weighted scores thus allows to account for some model deficiencies, but also allows more efficient estimation by focusing parameter inference, even if the true model is within the model class, as Fig \ref{fig:synth_ex2} shows.

\section{Discussion and conclusions}
\label{sec:conclusion}
In this work we have introduced the usage of threshold-weighted scoring rules to train probabilistic post-processing models, tailored to extremes. This is an alternative to schemes based on weighting loss functions by outcomes which different researchers have been proposing in the literature, which however are problematic as they might encourage hedging \citep{lerch_forecasters_2017}. We have shown the positive effects of training with the twCRPS on forecasts' extreme event performance. We have identified a distribution body-tail trade-off which manifests as a trade-off between CRPS and twCRPS performance. We have shown that training using a twCRPS is possible for a variety of thresholds of interest and have shown that successful training depends on the training datasize used. We have developed two options for users to balance distribution body and extreme event performance of their forecasts: weighted sums of CRPS and twCRPS and the linear pool between the CRPS and twCRPS forecast. We developed a number of synthetic experiments to show the effect of the twCRPS and finally we have calculated closed-form twCRPS expressions for a number of distributions, which can be found in Appendix \ref{sec:closed_form} together with different characterizations of the twCRPS (Appendix \ref{sec:properties}) \\

We hope that the approaches developed in this article help researchers and operational users to improve forecasts of extreme events, events that are of particular societal relevance. The methodology allows users to choose specific thresholds for which they wish to improve forecasts, which can be relevant for many applications, for example, for airplanes or drones that are only able to operate within certain wind conditions. Given such a threshold, forecasters can use weighted scoring rules to improve distributional forecasts over the threshold. If they are not only interested in extremes they can use either linear pooling or weighted training to balance body and tail forecast performance. The  methodology presented has been demonstrated in an application for wind speed, however improving forecasts of extremes (values over a certain impact-relevant threshold) can also be relevant for many other meteorological variables such as temperature and related heat stress or precipitation and precipitation accumulation. The application of the method to these cases is left for future work, however to aid this research we provide closed form twCRPS expressions for many distributions in the Appendix.\\

There are a number of further research questions that follow from this work and should be explored. First it would be interesting to explore the usage of other weighted scores such as the proper quantile weighted score \citep{gneiting_comparing_2011} or the proper outcome weighted scores \citep{holzmann_focusing_2017}, censored likelihood scores \citep{de_punder_localizing_2023} and the usage of different, non-threshold-based, weighting functions when training models with weighted scores with emphasis on different parts of the distribution. \citet{allen_evaluating_2023, allen_weighted_2023} develop multivariate versions of the weighted scores, by showing that weighted scores fall into the class of kernel scores. In order to improve predictions of compound extremes -- extremes that can have the multiplicity of the impact of singular extremes \citep{zscheischler_typology_2020} -- investigating the usage of these scores to train models with potentially improved forecasting skill for compound extremes might be of interest. This is especially relevant as for example \citet{chen_novel_2022} show the positive effects of a kernelized loss function for improved deterministic predictions of wind speed extremes and similar approaches could be successful for probabilistic predictions. With more complex loss functions or scores, closed-form solutions of these might not be available, thus it might be of interest to investigate sample-based optimization of weighted scores. This might also be relevant when it comes to training or finetuning of generative deep learning models, which have shown great potential for post-processing applications \citep{chen_generative_2024}. As neural networks are able to adjust well to large numbers of covariates, this could be combined with research into what predictors for post-processing models are most relevant to improve extreme event forecasts. Research on this and the usage of (weighted) scoring rules for training generative neural networks is in progress by the authors. In addition it might be of interest to investigate forecast combination methods further, for combining distribution body and distribution tail forecasts. For example one could explore methods based on online-learning \citep{van_der_meer_crps-based_2024} or pointwise combinations \citep{berrisch_crps_2023} and see whether they can help to alleviate the trade-off between CRPS and twCRPS performance further. Alternatively, optimization methods such as multi-objective optimization could be explored to see whether they manage to balance the CRPS and weighted scoring rule objectives.\\

Secondly, much more sophisticated post-processing methods exist. Many of these are extensions of EMOS or still based on a closed-form distributional structure, for example so-called distributional regression networks (DRN, \citealt{rasp_neural_2018}) parametrize the parameters of a predictive distribution as neural networks. The authors are hopeful that the results in this article might also generalize to other EMOS-like model structures, including other predictive and possibly tail-adaptive distributions. However, we leave this for future research.  \\

Third and finally, it would be interesting to explore the usage of weighted scoring rules for the training of post-processing models in small training data situations, at longer lead times \citep{wessel_lead-time-continuous_2024}, seasonal to subseasonal time scales or even on climatic timescales for the bias adjustment of climate models \citep{maraun_bias_2016, spuler_ibicus_2024}, where uncertainties in the prediction of extreme events are even larger and thus the need for probabilistic predictions is even stronger. For these timescales it might also be of interest to explore the usage of extreme value theory informed post-processing methods \citep{friederichs_forecast_2012, lerch_comparison_2013, oesting_statistical_2017, velthoen_improving_2019, taillardat_forest-based_2019} and explore training of these using threshold-weighted scores, informed by the synthetic experiments in section \ref{sec:synth_ex}. This is especially relevant as parameter optimization methods, such as the ones explored here, are only one possibility to improve predictive performance of models, next to improvements in the model structure itself. It would be interesting to investigate this trade-off between model structure and estimation further and explore situations and models where threshold-weighted scores might be especially useful for estimation.\\

\textbf{Acknowledgements:} Jakob Wessel was supported during this work by the Engineering and Physical Sciences Research Council (EPSRC) under grant award number 2696930 and acknowledges support by The Alan Turing Institute’s Enrichment Scheme. The authors would like to thank Sam Allen for pointing out the connection between the threshold-weighted CRPS and the CRPS of a censored distribution, as well as input regarding closed-form expressions of the twCRPS. They are grateful to Jamie Kettleborough and Fiona Spuler for helpful discussions of the results and comments on the manuscript draft. Also, they would like to thank the three reviewers for comments and feedback that have substantially improved the manuscript. \\

\textbf{Code and Data availability:} The code for this study can be found at \url{https://github.com/jakobwes/Improving-probabilistic-forecasts-of-extreme-wind-speeds}. Unfortunately, the authors are unable to share the data, however this can be requested from the UK Met Office.

\pagebreak
\bibliographystyle{plainnat}
\bibliography{references} 

\clearpage
\appendix
\section{Characterizations and properties of the twCRPS}
\label{sec:properties}
In the following, to simplify the notation, we make frequent use of the chaining function $v_s$ in its canonical form defined as $v_s(x) := \max(s, x)$ for $x\in\mathbb{R}$ and $s\in\mathbb{R}$.\\

\textbf{Characterization 1}: equivalence of the twCRPS and the CRPS of the censored distribution (lemma \ref{lemma_sam_allen} in section \ref{sec:weighted_scoring_rules}).

\begingroup
\def\thetheorem{\ref{lemma_sam_allen}}
\begin{lemma}
    Let $F$ be a CDF and $y$ an observation. Then
    \begin{align}
        \twcrps_\tau (F, y) = \crps(\tilde F_\tau, v_\tau(y))
    \end{align}
    where $\tilde F_\tau$ is the CDF censored at $\tau$:
    \begin{align}
    \tilde F_\tau(x) = \begin{cases}
        0, & x < \tau, \\
        F(x), & x \geq \tau.
    \end{cases}
\end{align}
\end{lemma}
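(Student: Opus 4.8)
The plan is to work directly from the two integral definitions and reduce one to the other by splitting the domain of integration at the threshold $\tau$. Writing $\tilde y = v_\tau(y) = \max(y,\tau)$ and splitting the defining integral of the censored CRPS,
\begin{align}
\crps(\tilde F_\tau, \tilde y) = \int_{-\infty}^{\tau} [\tilde F_\tau(z) - \mathbb{1}\{z \geq \tilde y\}]^2\, dz + \int_{\tau}^{\infty} [\tilde F_\tau(z) - \mathbb{1}\{z \geq \tilde y\}]^2\, dz,
\end{align}
the first step is to show the lower integral vanishes: for $z < \tau$ one has $\tilde F_\tau(z) = 0$ by definition of the censored CDF, and since $\tilde y \geq \tau > z$ one also has $\mathbb{1}\{z \geq \tilde y\} = 0$, so the integrand is identically zero on $(-\infty, \tau)$.

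The second step is to simplify the upper integral. For $z \geq \tau$ we have $\tilde F_\tau(z) = F(z)$, so it only remains to match the indicator term. The key observation is that $\mathbb{1}\{z \geq v_\tau(y)\} = \mathbb{1}\{z \geq y\}$ for every $z \geq \tau$: if $y \geq \tau$ then $v_\tau(y) = y$ and the two indicators are literally equal; if $y < \tau$ then $v_\tau(y) = \tau \leq z$, so the left-hand indicator equals $1$, while $z \geq \tau > y$ forces the right-hand indicator to equal $1$ as well. Hence the upper integral equals $\int_\tau^\infty [F(z) - \mathbb{1}\{z\geq y\}]^2\, dz$, which is exactly $\twcrps_\tau(F,y)$ as defined in equation \ref{eq:twcrps}, completing the argument.

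The proof is entirely elementary, so there is no substantive obstacle; the only thing to be careful about is the bookkeeping of the case distinction $y \geq \tau$ versus $y < \tau$ when identifying the indicator functions, together with a quick check that the boundary point $z = \tau$ behaves consistently in both cases (it contributes nothing to the integral anyway). An alternative, even shorter, presentation is to note that the two facts above give $\crps(\tilde F_\tau, \tilde y) = \int_{-\infty}^{\infty} \mathbb{1}\{z \geq \tau\}\,[F(z) - \mathbb{1}\{z \geq y\}]^2\, dz$ in one line, which is the definition of $\twcrps_\tau(F,y)$; but the split-and-compare version makes the role of censoring most transparent.
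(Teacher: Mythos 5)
Your proof is correct and follows essentially the same route as the paper's: both arguments simply unwind the definitions of $\tilde F_\tau$ and $v_\tau$ and observe that the integrand of the censored CRPS vanishes below $\tau$ and coincides with that of the twCRPS above $\tau$; you merely spell out the indicator case distinction that the paper's one-line computation leaves implicit. No gap.
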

\addtocounter{theorem}{-1}
\endgroup
\begin{proof}[Proof]
The proof follows from the definitions of ${\tilde{F}}_\tau$ and $v_\tau$:
\begin{align*}
    \crps(\tilde F_\tau, v_\tau (y)) &= \int_{-\infty}^\infty \left[\tilde{F}_\tau(x) - \mathbb{1}\left\{x \geq v_\tau(y)\right\}\right]^2 dx \\
    &= \int_\tau^\infty \left[F(x) - \mathbb{1}\left\{x \geq y\right\}\right]^2 dx \\
    &= \twcrps_\tau (F, y)
\end{align*}
\end{proof}

\textbf{Characterization 2}: expression of the twCRPS through the CRPS and a correction term. 
\begin{lemma}
\label{lemma_computation}
Let $F$ be a CDF and $y$ an observation. Then
\begin{align}
    \twcrps_\tau(F, y) = \crps\left(F,v_\tau(y)\right) - \Delta_F 
\end{align}
where $\Delta_F$ is a correction term given as
\begin{align}
    \Delta_F := \int_{-\infty}^\tau \left[F(x)\right]^2 dx.
\end{align}
\end{lemma}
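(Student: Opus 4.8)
The plan is to prove the identity by writing out both sides as integrals and comparing them region by region, splitting the real line at the threshold $\tau$. The quickest route uses Lemma \ref{lemma_sam_allen}, which already identifies $\twcrps_\tau(F,y)$ with $\crps(\tilde{F}_\tau, v_\tau(y))$, so it suffices to show $\crps(F, v_\tau(y)) - \crps(\tilde{F}_\tau, v_\tau(y)) = \Delta_F$. Both of these CRPS are integrals of squared differences involving the same indicator term $\mathbb{1}\{z \ge v_\tau(y)\}$; the only difference is $F(z)$ versus $\tilde{F}_\tau(z)$ in the first slot.

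First I would note that since $v_\tau(y) = \max(y,\tau) \ge \tau$, for any $z < \tau$ we have $\mathbb{1}\{z \ge v_\tau(y)\} = 0$. On the region $z \ge \tau$ the integrands coincide exactly, because $\tilde{F}_\tau(z) = F(z)$ there by definition of the censored CDF. On the region $z < \tau$ the integrand of $\crps(\tilde F_\tau, v_\tau(y))$ is $[\,0 - 0\,]^2 = 0$, while the integrand of $\crps(F, v_\tau(y))$ is $[F(z)]^2$. Subtracting, everything above $\tau$ cancels and what remains is $\int_{-\infty}^\tau [F(z)]^2\,dz = \Delta_F$, which rearranges to the claimed formula.

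If one prefers a self-contained argument not invoking Lemma \ref{lemma_sam_allen}, I would instead start directly from the definition of $\crps(F, v_\tau(y))$, split the outer integral as $\int_{-\infty}^\tau + \int_\tau^\infty$, use $\mathbb{1}\{z \ge v_\tau(y)\} = 0$ on the lower piece to get $\int_{-\infty}^\tau [F(z)]^2 dz = \Delta_F$, and then verify that on the upper piece $\mathbb{1}\{z \ge v_\tau(y)\} = \mathbb{1}\{z \ge y\}$ for all $z \ge \tau$. The latter is a short case check: if $y \le \tau$ then $v_\tau(y) = \tau$ and both indicators equal $1$ whenever $z \ge \tau$; if $y > \tau$ then $v_\tau(y) = y$ and the two indicators are literally the same. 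This shows the upper piece equals $\twcrps_\tau(F,y)$ by the definition in equation \ref{eq:twcrps}.

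There is no real obstacle here — the argument is elementary. The only point requiring a moment's care is the case distinction on whether $y \le \tau$ or $y > \tau$ when comparing $\mathbb{1}\{z \ge v_\tau(y)\}$ with $\mathbb{1}\{z \ge y\}$ on $[\tau,\infty)$, and the observation that the indicator vanishes below $\tau$ because $v_\tau(y) \ge \tau$; everything else is bookkeeping.
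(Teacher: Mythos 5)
Your proof is correct. Your second, self-contained argument is essentially the paper's own proof: the paper computes $\crps(F, v_\tau(y)) - \Delta_F$ by splitting the integral at $\tau$ and at $v_\tau(y)$, which is exactly your bookkeeping with the indicator $\mathbb{1}\{z \geq v_\tau(y)\}$ vanishing below $\tau$ and the case check $\mathbb{1}\{z \geq v_\tau(y)\} = \mathbb{1}\{z \geq y\}$ on $[\tau,\infty)$. Your primary route is a genuinely different (if closely related) packaging: it first invokes Lemma \ref{lemma_sam_allen} to replace $\twcrps_\tau(F,y)$ by $\crps(\tilde F_\tau, v_\tau(y))$ and then compares the two CRPS integrands pointwise, so that $\Delta_F$ appears as the contribution of the probability mass that censoring removes below $\tau$. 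That version buys a cleaner conceptual reading — the correction term is literally the CRPS cost of the censored region — at the price of depending on Lemma \ref{lemma_sam_allen}, whereas the paper's direct computation is self-contained and makes Lemma \ref{lemma_computation} independent of the censoring characterization. Either way the argument is sound; the only care needed is at the single point $z=\tau$ (strict versus non-strict inequality in the indicator), which is immaterial since it has measure zero.
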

\begin{proof}[Proof (Lemma \ref{lemma_computation})]
The proof follows by considering integration limits in the CRPS and twCRPS definitions and the definition of $v_\tau$:
    \begin{align*}
        \crps(F, v_\tau (y)) - \Delta_F &= \int_{-\infty}^\tau \left[F(x)\right]^2 dx + \int_\tau^{v_\tau(y)} \left[F(x)\right]^2 dx \\ \nonumber &+ \int_{v_\tau(y)}^\infty \left[F(x) - 1\right]^2 dx - \int_{-\infty}^\tau \left[F(x)\right]^2 dx \\
        &= \int_\tau^{v_\tau(y)} \left[F(x)\right]^2 dx + \int_{v_\tau(y)}^\infty \left[F(x) - 1\right]^2 dx \\
        &= \twcrps_\tau(F, y).
    \end{align*}        
\end{proof}

\textbf{Property 1}: The twCRPS is stable under censoring at a point below the threshold $\tau$.

\begin{lemma}
    \label{lemma_censoring}
    Let $F$ be a CDF and ${\tilde{F}}_t$ be the CDF censored at $t$:
    \begin{align}
        {\tilde{F}}_t(x) = \begin{cases}
        0, & x < t, \\
        F(x), & x \geq t.
    \end{cases}
    \end{align}

    Then for $t \leq \tau$:
    
    \begin{align}
        \twcrps_\tau ({\tilde{F}}_t, y) = \twcrps_\tau (F, y)
    \end{align}
\end{lemma}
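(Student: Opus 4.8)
The plan is to work directly from the definition of the threshold-weighted CRPS in equation \ref{eq:twcrps}, namely $\twcrps_\tau(G, y) = \int_\tau^\infty [G(z) - \mathbb{1}\{z \geq y\}]^2\, dz$, applied once with $G = {\tilde{F}}_t$ and once with $G = F$. Because both sides of the claimed identity involve the same observation $y$, and the indicator term $\mathbb{1}\{z \geq y\}$ does not depend on the distribution, it suffices to show that the two integrands coincide for every $z$ in the integration range $[\tau, \infty)$.

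The key observation is that whenever $z \geq \tau$ and $t \leq \tau$, we have $z \geq t$, and therefore by the definition of the censored CDF ${\tilde{F}}_t(z) = F(z)$. Hence $[{\tilde{F}}_t(z) - \mathbb{1}\{z \geq y\}]^2 = [F(z) - \mathbb{1}\{z \geq y\}]^2$ pointwise on $[\tau, \infty)$, and integrating over $z \in [\tau, \infty)$ gives $\twcrps_\tau({\tilde{F}}_t, y) = \twcrps_\tau(F, y)$.

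An alternative route, which I would mention for context, combines Characterization 1 (Lemma \ref{lemma_sam_allen}) with the idempotency of censoring: applying Lemma \ref{lemma_sam_allen} to ${\tilde{F}}_t$ gives $\twcrps_\tau({\tilde{F}}_t, y) = \crps\big(\widetilde{({\tilde{F}}_t)}_\tau, v_\tau(y)\big)$, and since $t \leq \tau$ one checks that censoring ${\tilde{F}}_t$ again at $\tau$ returns $\tilde F_\tau$ — because $\tilde F_\tau$ depends on its argument only through values $\geq \tau$, where ${\tilde{F}}_t$ and $F$ agree — so that a second application of Lemma \ref{lemma_sam_allen} yields $\twcrps_\tau(F, y)$.

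There is no substantial obstacle here: the only point requiring care is that the pointwise identity ${\tilde{F}}_t(z) = F(z)$ must hold on the whole integration domain $[\tau, \infty)$, which is precisely where the hypothesis $t \leq \tau$ is used; if instead $t > \tau$, the two integrands would differ on $[\tau, t)$ and the identity would in general fail.
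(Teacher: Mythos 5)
Your proof is correct. Your primary argument — working directly from the definition in eq.~\ref{eq:twcrps} and noting that for $z \geq \tau \geq t$ one has ${\tilde{F}}_t(z) = F(z)$, so the integrands agree pointwise on the whole integration domain — is more elementary and self-contained than the paper's, which simply invokes Lemma~\ref{lemma_sam_allen} (your ``alternative route''). The direct computation has the advantage of making transparent exactly where the hypothesis $t \leq \tau$ enters and why the claim fails for $t > \tau$, whereas the paper's one-line derivation from Characterization~1 is shorter but relies on the reader checking the idempotency-of-censoring step that you spell out explicitly. Both approaches are valid and essentially equivalent in content; your write-up covers the paper's argument as well as a more hands-on one.
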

\begin{proof}
    This follows from Lemma \ref{lemma_sam_allen}.
\end{proof}

\textbf{Property 2}: The lower-tail twCRPS can be expressed in terms of the CRPS and the upper-tail twCRPS.

\begin{lemma}
    The lower-tail twCRPS (for values below instead of above a threshold),
    \begin{align}
        \twcrps_\tau^{\text{l}}(F,y) :=  \int_{-\infty}^{\tau}  \left[F(x) - \mathbb{1}\{x \geq y\}\right]^2 dx 
    \end{align}
   can be expressed in terms of the upper-tail twCRPS as
    \begin{align}
        \twcrps_\tau^{\text{l}}(F,y) + \twcrps_\tau^{\text{u}}(F,y) = \crps(F,y) 
    \end{align}
	where $\twcrps_\tau^{\text{u}}$ is equal to $\twcrps_\tau$ as used throughout the paper.
\end{lemma}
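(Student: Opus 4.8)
The plan is to observe that all three quantities appearing in the statement are integrals of the same nonnegative integrand $\left[F(z) - \mathbb{1}\{z \geq y\}\right]^2$ over different ranges, so the identity should follow immediately from additivity of the integral over the disjoint decomposition $(-\infty,\infty) = (-\infty,\tau) \cup [\tau,\infty)$. No use of Lemma \ref{lemma_sam_allen} or Lemma \ref{lemma_computation} is needed; this is purely a matter of unpacking the definitions.

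Concretely, I would start from the definition of the CRPS in equation \ref{CRPS} and split the domain of integration at $\tau$:
\begin{align}
    \crps(F,y) &= \int_{-\infty}^{\infty} \left[F(z) - \mathbb{1}\{z \geq y\}\right]^2 dz \\
    &= \int_{-\infty}^{\tau} \left[F(z) - \mathbb{1}\{z \geq y\}\right]^2 dz + \int_{\tau}^{\infty} \left[F(z) - \mathbb{1}\{z \geq y\}\right]^2 dz.
\end{align}
The first term is exactly $\twcrps_\tau^{\text{l}}(F,y)$ by the definition given in the statement, and the second term is $\twcrps_\tau^{\text{u}}(F,y)$, which coincides with $\twcrps_\tau(F,y)$ from equation \ref{eq:twcrps} as noted in the statement. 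Rearranging gives the claimed identity directly.

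There is no genuine obstacle here; the only point that merits a word of care is the legitimacy of splitting the integral. This holds because the integrand takes values in $[0,1]$ and the CRPS is finite under the standing moment assumption on $F$, and because the single point $\{\tau\}$ carries zero Lebesgue measure it is irrelevant whether it is assigned to the lower-tail or the upper-tail piece. In this respect the result is analogous to Characterization 2 (Lemma \ref{lemma_computation}): an immediate consequence of the definitions rather than something requiring a substantive argument.
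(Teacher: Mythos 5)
Your proof is correct and matches the paper's argument, which simply notes that the identity "follows when considering the integration limits" — i.e., the same splitting of the CRPS integral at $\tau$ that you carry out explicitly. Your added remark on the measure-zero point $\{\tau\}$ is a harmless refinement of the same idea.
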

\begin{proof}
    This follows when considering the integration limits.
\end{proof}

\section{Closed-form expressions for the twCRPS}
\label{sec:closed_form}

We here provide a list of closed-form expressions for the twCRPS for a couple of standard distributions often used in the context of statistical post-processing. Rather than aiming at an exhaustive list, we restrict ourselves to generic cases of the most important distributions. All of the formulae have been verified against numerical integration. In the limit of the threshold $\tau$ moving towards the lower end of the distribution, known expressions for the CRPS are recovered. \\

Note that the twCRPS is stable under censoring (see Lemma \ref{lemma_censoring}), meaning that all of the closed-form expressions below also apply to censored versions of the distributions if censoring at a point below the threshold $\tau$.\\

We again make use of the chaining function $v_s(x)=\max(s,x)$ for $x\in\mathbb{R}$ and $s\in\mathbb{R}$.\\

\subsection{Distributions supported on the real line}

\subsubsection{Laplace distribution}
twCRPS expression:
\begin{gather}
    \twcrps_\tau \left(F, y\right) =  \begin{cases}
        |v_\tau(y)| -\frac{3}{4} + \exp(-|v_\tau(y)|) - \frac{1}{8} \exp(2\tau), & \tau < 0,\\
        v_\tau(y) - \tau + \exp(-v_\tau(y)) + \frac{1}{8} \exp(-2\tau) - \exp(-\tau), & \tau \geq 0,\\
    \end{cases} \\
    \twcrps_\tau\left(F_{\mu, \sigma}, y\right) = \sigma \, \twcrps_{(\tau-\mu)/\sigma}\left(F,\frac{y-\mu}{\sigma}\right),
\end{gather}
where $F(x)=\frac{1}{2}+\frac{1}{2}{\rm sgn}(x)[1-\exp(-|x|)]$ is the standard Laplace CDF, and $F_{\mu,\sigma}(x)=F\left(\frac{x-\mu}{\sigma}\right)$ with location 
parameter $\mu\in\mathbb{R}$ and scale parameter $\sigma >0$.\\

Source: derived by the authors.

\subsubsection{Logistic distribution}
twCRPS expression:
\begin{gather}
    \twcrps_\tau\left(F, y\right) = F(\tau) -1 +\log\frac{1-F(\tau)}{1-F(v_\tau(y))} -\log F(v_\tau(y)), \\
    \twcrps_\tau\left(F_{\mu, \sigma}, y\right) = \sigma \, \twcrps_{(\tau-\mu)/\sigma}\left(F,\frac{y-\mu}{\sigma}\right),
\end{gather}
where $F(x)=1/[1+\exp(-x)]$ is the standard logistic CDF, and $F_{\mu,\sigma}(x)=F\left(\frac{x-\mu}{\sigma}\right)$ with location parameter $\mu\in\mathbb{R}$ and scale
parameter $\sigma >0$.\\

Source: This formula is given by \citet{allen_incorporating_2021}.

\subsubsection{Normal distribution}
twCRPS expression:
\begin{gather}
\begin{align}
    \twcrps_\tau\left(\Phi, y\right) = &-\tau \Phi^2(\tau) + v_\tau(y) \left[2\Phi(v_\tau(y))-1\right] \\ \nonumber
    &+ 2 \left[\varphi(v_\tau(y))-\varphi(\tau)\Phi(\tau)\right]
	- \frac{1}{\sqrt{\pi}} \left[1-\Phi(\sqrt{2}\tau)\right],
\end{align}\\
    \twcrps_\tau\left(F_{\mu, \sigma}, y\right) = \sigma\,\twcrps_{(\tau-\mu)/\sigma}\left(\Phi,\frac{y-\mu}{\sigma}\right),
\end{gather}
where $\varphi$ and $\Phi$ are the standard normal PDF and CDF, respectively, and $F_{\mu,\sigma}(x)=\Phi\left(\frac{x-\mu}{\sigma}\right)$ with location parameter $\mu\in\mathbb{R}$
and scale parameter $\sigma > 0$.\\

Source: This formula is given by \citet{wessel_lead-time-continuous_2024}.

\subsubsection{Student's $t$ distribution}
twCRPS expression:
\begin{gather}
    \begin{align}
    \twcrps_{\tau}\left(F_{\nu}, y\right) = &  - \tau F_\nu^2(\tau)+ v_\tau(y) \left[2F_\nu(v_\tau(y)) - 1\right] \nonumber \\
                  &	+ 2\left[F_\nu(\tau)G_\nu(\tau) - G_\nu(v_\tau(y))\right]  - \bar{B}_\nu \left[1 - H_\nu(\tau)\right],
\end{align}\\
    \twcrps_\tau\left(F_{\nu,\mu,\sigma}, y\right) = \sigma \, \twcrps_{(\tau-\mu)/\sigma}\left(F_\nu,\frac{y-\mu}{\sigma}\right),
\end{gather}
where $F_\nu$ is the CDF of the Student's $t$ distribution with $\nu >0$ degrees of freedom, and $F_{\nu,\mu,\sigma}(x)=F_\nu\left(\frac{x-\mu}{\sigma}\right)$ with
location parameter $\mu\in\mathbb{R}$ and scale parameter $\sigma >0$. Here, $\bar{B}_\nu$ and the functions $G_\nu$ and $H_\nu$ are defined as in \citet{jordan_evaluating_2019}.\\

Source: derived by the authors. This follows from the CRPS expression for the censored Student's $t$ distribution in \citet{jordan_evaluating_2019}.

\subsection{Distributions with non-negative support}

\subsubsection{Exponential distribution}
twCRPS expression for $\tau\geq 0$:
\begin{gather}
    \twcrps_\tau\left(F_\lambda, y\right) =  v_\tau(y) - \tau +\frac{2}{\lambda}\left[\exp(-\lambda v_\tau(y))-\exp(-\lambda\tau)\right] + \frac{1}{2\lambda}\exp(-2\lambda\tau),
\end{gather}
where $F_\lambda$ is the CDF of the exponential distribution with rate parameter $\lambda >0$.\\

Source: derived by the authors.

\subsubsection{Gamma distribution}
twCRPS expression for $\tau\geq 0$:
\begin{align}
    \twcrps_\tau \left(F_{\alpha,\beta}, y\right) = & -\tau F_{\alpha,\beta}^2(\tau) +v_\tau(y)[2F_{\alpha,\beta}(v_\tau(y))-1] \nonumber \\
& +\frac{\alpha}{\beta}[1-F_{\alpha,\beta}^2(\tau)+2F_{\alpha,\beta}(\tau)F_{\alpha+1,\beta}(\tau)-2F_{\alpha+1,\beta}(v_\tau(y))] -\frac{1-F_{2\alpha,\beta}(2\tau)}{\beta B\left(\frac{1}{2},\alpha\right)},
\end{align}
where $F_{\alpha,\beta}$ is the CDF of the gamma distribution with shape parameter $\alpha >0$ and rate parameter $\beta >0$, and $B$ denotes the beta function.\\

Source: derived by the authors.

\subsubsection{Log-logistic distribution}
twCRPS expression for $0<\sigma<1$ and $\tau\geq 0$:
\begin{equation}
\begin{split}
\twcrps_\tau\left(F_{\mu, \sigma}, y\right) = & -\tau F_{\mu,\sigma}^2(\tau) + v_\tau(y)\left[2F_{\mu, \sigma}(v_\tau(y)) - 1\right] \\
   & + 2 \exp(\mu) \left[B_{\rm u}\left(F_{\mu,\sigma}(v_\tau(y)); 1 + \sigma, 1 - \sigma\right) -B_{\rm u}\left(F_{\mu,\sigma}(\tau);2+\sigma,1-\sigma\right)\right],
\end{split}
\end{equation}
where $F_{\mu, \sigma}$ is the log-logistic CDF with log-location parameter $\mu\in\mathbb{R}$ and log-scale parameter $\sigma >0$, and $B_{\rm u}$ denotes the upper incomplete beta function. The alternative parameterization of the log-logistic distribution in terms of scale parameter $\alpha >0$ and shape parameter $\beta >0$ follows from $\mu = \log\alpha$ and $\sigma = 1/\beta$.\\

Source: derived by the authors.

\subsubsection{Log-normal distribution}
twCRPS expression for $\tau > 0$:
\begin{equation}
    \begin{split}
        \twcrps_\tau\left(F_{\mu, \sigma}, y\right) = & -\tau F^2_{\mu, \sigma}(\tau) + v_\tau(y) [2 F_{\mu, \sigma}(v_\tau(y)) -1] \\ 
        & + 2 \exp\left(\mu + \frac{1}{2} \sigma^2 \right) \left[1-F_{\mu + \sigma^2, \sigma} (v_\tau(y)) - I\left(\frac{\log\tau -\mu}{\sigma} -\sigma, \sigma\right)\right],
    \end{split}
\end{equation}
where $F_{\mu,\sigma}$ is the CDF of the log-normal distribution with log-location parameter $\mu\in\mathbb{R}$ and log-scale parameter $\sigma >0$.

Here, $I$ is the integral function
\begin{equation}
\label{def_I}
    I(\alpha, \beta) = \int_\alpha^\infty \varphi(u)\, \Phi(u + \beta) \, du,\hspace{1.5cm}\alpha\in\mathbb{R},\hspace{0.5cm}\beta\in\mathbb{R},
\end{equation}
with $\varphi$ and $\Phi$ denoting the standard normal PDF and CDF, respectively. We discuss the evaluation of this integral function in section \ref{sec:I}. \\

Source: derived by the authors.

\subsection{Distributions with flexible support}

\subsubsection{Continuous uniform distribution on $[a, b]$}
twCRPS expression:
\begin{gather}
    \twcrps_\tau\left(F, y\right) = \frac{1}{3}\left(1-\tau^3\right) +v_\tau^2(y) -v_\tau(y), \qquad 0\leq\tau\leq 1, \qquad 0\leq y\leq 1,\\
	\twcrps_\tau\left(F_a^b, y\right) = (b-a) \, \twcrps_{(\tau-a)/(b-a)}\left(F,\frac{y-a}{b-a}\right), \qquad a\leq\tau\leq b, \qquad a\leq y\leq b,
\end{gather}
where $F$ is the CDF of the standard uniform distribution, and $F_a^b(x)=F\left(\frac{x-a}{b-a}\right)$ with $-\infty < a < b<\infty$.\\

Source: derived by the authors.

\subsubsection{Generalized Pareto distribution}
twCRPS expression for $\xi < 1$: 
\begin{gather}
\begin{align}
\twcrps_\tau \left(F_\xi,y\right) = & v_\tau(y) -\tau +\frac{2}{1-\xi}\left\{{\left[1-F_\xi(v_\tau(y))\right]}^{1-\xi} -{\left[1-F_\xi(\tau)\right]}^{1-\xi}\right\} \nonumber \\
& +\frac{1}{2-\xi}{\left[1-F_\xi(\tau)\right]}^{2-\xi},\qquad\qquad \tau\geq 0,
\end{align}\\
\twcrps_\tau\left(F_{\xi,\mu,\sigma},y\right) = \sigma \, \twcrps_{(\tau-\mu)/\sigma}\left(F_\xi,\frac{y-\mu}{\sigma}\right), \qquad\qquad \tau\geq\mu,
\end{gather} 
where $F_\xi$ is the CDF of the generalized Pareto distribution with shape parameter $\xi\in\mathbb{R}$, zero location and unit scale, and $F_{\xi,\mu,\sigma}(x)=F_\xi\left(\frac{x-\mu}{\sigma}\right)$ with location parameter $\mu\in\mathbb{R}$ and scale parameter $\sigma >0$.\\

Source: derived by the authors.

\subsubsection{Truncated logistic distribution on $[a, b]$}
twCRPS expression for $a\leq\tau\leq b$ and $a\leq y\leq b$:
\begin{gather}
\begin{align}
    \twcrps_\tau \left(F_a^b, y\right) = & \frac{1}{[F(b) - F(a)]^2} \left\{ F(\tau) - F(b) + F^2(a) \log \frac{F(v_\tau(y))}{F(\tau)} \right. \nonumber \\
    & \left. + [1-F(a)]^2 \log \frac{1-F(\tau)}{1-F(v_\tau(y))} + F^2(b) \log \frac{F(b)}{F(v_\tau(y))} \right. \nonumber \\
    & \left. + [1-F(b)]^2 \log \frac{1-F(v_\tau(y))}{1-F(b)} \right\}, 
\end{align}\\
    \twcrps_\tau\left(F_{a,\mu,\sigma}^b, y\right) = \sigma \, \twcrps_{(\tau-\mu)/\sigma}\left(F_{(a-\mu)/\sigma}^{(b-\mu)/\sigma},\frac{y-\mu}{\sigma}\right),
\end{gather}
where $F(x)=1/[1+\exp(-x)]$ is the standard logistic CDF, $F_a^b$ is the standard logistic CDF truncated to $[a,b]$ with $-\infty < a < b <\infty$, and $F_{a,\mu,\sigma}^b(x)=F_{(a-\mu)/\sigma}^{(b-\mu)/\sigma}\left(\frac{x-\mu}{\sigma}\right)$ with location parameter
$\mu\in\mathbb{R}$ and scale parameter $\sigma >0$.\\

For only left truncation ($b\to\infty$), we have:
\begin{gather}
\begin{align}
    \twcrps_\tau \left(F_a^\infty, y\right) = & - \frac{1-F(\tau)}{[1-F(a)]^2} +\log\frac{1-F(\tau)}{1-F(v_\tau(y))} - \frac{F^2(a)}{[1-F(a)]^2}\log F(\tau) \nonumber \\
	& - \frac{1+F(a)}{1-F(a)} \log F(v_\tau(y)).
	\end{align}
\end{gather}

Source: For only left truncation, this formula is given by \citet{allen_incorporating_2021}; the general case of truncation to $[a,b]$ is derived by the authors.

\subsubsection{Truncated normal distribution on $[a, b]$}
twCRPS expression for $a\leq\tau\leq b$ and $a\leq y\leq b$:
\begin{equation}
\begin{split}
\twcrps_\tau\left(F_a^b, y\right) = & -\tau \left[\frac{\Phi(\tau) - \Phi(a)}{\Phi(b) - \Phi(a)}\right]^2 + v_\tau(y) \left[ 2\frac{\Phi(v_\tau(y)) - \Phi(a)}{\Phi(b) - \Phi(a)} -1 \right] \\ 
& + \frac{2}{\Phi(b) - \Phi(a)} \left[ \varphi(v_\tau(y)) - \varphi(\tau) \frac{\Phi(\tau) - \Phi(a)}{\Phi(b) - \Phi(a)}\right] - \frac{1}{\sqrt{\pi}} \frac{\Phi(\sqrt{2}b) - \Phi(\sqrt{2}\tau)}{\left[\Phi(b) - \Phi(a)\right]^2},
\end{split}
\end{equation}
\begin{equation}
\twcrps_\tau\left(F_{a,\mu, \sigma}^b, y\right) = \sigma \, \twcrps_{(\tau-\mu)/\sigma}\left(F_{(a-\mu)/\sigma}^{(b-\mu)/\sigma},\frac{y-\mu}{\sigma}\right),
\end{equation}
where $F_a^b$ is the standard normal CDF truncated to $[a,b]$ with $-\infty < a< b<\infty$, and $F_{a,\mu,\sigma}^b(x)=F_{(a-\mu)/\sigma}^{(b-\mu)/\sigma}\left(\frac{x-\mu}{\sigma}\right)$ with location
parameter $\mu\in\mathbb{R}$ and scale parameter $\sigma >0$. Here, $\varphi$ and $\Phi$ denote the standard normal PDF and CDF, respectively.\\

The case of only left truncation ($b\to\infty$) follows from $\Phi\left(\frac{b-\mu}{\sigma}\right)=\Phi\left(\frac{\sqrt{2}(b-\mu)}{\sigma}\right)= 1$ for any $\mu\in\mathbb{R}$ and $\sigma >0$.\\

Source: This formula is given by \citet{wessel_lead-time-continuous_2024}.

\subsection{Some notes on the integral function $I(\alpha, \beta)$}
\label{sec:I}

In the derivation of the twCRPS for the log-normal distribution the integral function $I(\alpha, \beta)$ appears (equation(\ref{def_I})). 
It has the statistical interpretation as the probability $P(X > \alpha, Y < X+\beta)$ where $X$ and $Y$ are two independent standard normal random variables.
The integral function $I$ can be expressed in terms of Owen's $T$ function \citep{owen_tables_1956, owen_table_1980},
\begin{equation}
    T(h, a) = \frac{1}{2\pi} \int_0^a \frac{\exp\left[-\frac{1}{2} h^2 (1+x^2)\right]}{1+x^2},\qquad h\in\mathbb{R},\qquad a\in\mathbb{R},
\end{equation}
as
\begin{equation}
    I(\alpha, \beta) = \frac{1}{2}\left[\Phi\left(\frac{\beta}{\sqrt{2}}\right) - \Phi(\alpha)\right] + T\left(\alpha, 1+\frac{\beta}{\alpha}\right) 
	+ T\left(\frac{\beta}{\sqrt{2}}, 1+\frac{2\alpha}{\beta}\right) + \frac{1}{2} \mathbb{1}\{\alpha \beta < 0\},
\end{equation}
for $\alpha \neq 0$ and $\beta \neq 0$, and otherwise
\begin{align}
    I(\alpha, 0) &= \frac{1}{2}[1-\Phi^2(\alpha)],\\ 
	I(0, \beta) &= \Phi\left(\frac{\beta}{\sqrt{2}}\right)  - \frac{1}{2} \Phi^2\left(\frac{\beta}{\sqrt{2}}\right).
\end{align}
There are routines available in \texttt{R} and other languages for evaluating Owen's $T$ function based on various series expansions or numerical integration. 

Alternatively, we propose a fast and stable method for evaluating $I(\alpha,\beta)$ based on a recursive power series expansion, making use of (probabilist's) Hermite 
polynomials ${\{H_n\}}_{n=0}^\infty$. The integral function $I$ has the Taylor series with respect to the second argument
\begin{align}
    I(\alpha, \beta) = \int_a^\infty \varphi(u)\, \Phi(u+\beta)\, du = \sum_{n = 0}^\infty I_n
\end{align}
where
\begin{equation}
    I_n = \frac{J_n}{n!} \beta^n
\end{equation}
with
\begin{equation}
    J_0 = I(\alpha,0) = \frac{1}{2} [1-\Phi^2(\alpha)]
\end{equation}
and
\begin{equation}
    J_n = {\left.\frac{\partial^n I}{\partial\beta^n}\right|}_{(\alpha,0)} = (-1)^{n-1} \int_\alpha^\infty H_{n-1}(u)\, \varphi^2(u)\, du, \qquad\qquad n \geq 1 .
\end{equation}
Using integration by parts and properties of the Hermite polynomials yields the following recursive scheme:

Initialize
\begin{equation}
    I_0 =  \frac{1}{2} [1-\Phi^2(\alpha)], \quad I_1 = \frac{\beta}{2\sqrt{\pi}} [1-\Phi(\sqrt{2}\alpha)], \quad I_2 = -\frac{\beta^2}{4} \varphi^2 (\alpha),
\end{equation}
and
\begin{equation}
    C_1 = 0, \quad C_2 = - \frac{\beta^2}{4} \varphi^2(\alpha),
\end{equation}
then iterate:
\begin{align}
    C_n = & -\frac{\alpha \beta}{n} C_{n-1} - \frac{n-3}{(n-1)n}\beta^2 C_{n-2}, \qquad\qquad n \geq 3, \\
    I_n = & C_n - \frac{n-2}{2(n-1)n} \beta^2 I_{n-2}, \qquad\qquad n \geq 3.
\end{align}
In practice, the power series is truncated after a finite number of terms as soon as $|I_n|$ has become small enough. 
As the recursion may run in two branches of different magnitude involving even and odd indices $n$, respectively, we truncate the series after $N_*$ terms where $N_*$ is the 
smallest natural number such that $|I_{N_* -1}| < \eta$ and $|I_{N_*}| < \eta$ with, say, $\eta={10}^{-15}$.
An estimate of the absolute error in $I(\alpha,\beta)$ which turns out to be very close to the actual absolute error can be obtained as
\begin{align}
    \text{err} = \max \left(\eta, \varepsilon \max_{N = 0, \dots, N_*} \left| \sum_{n=0}^N I_n\right|\right),
\end{align}
where $\varepsilon$ is the smallest number on the machine, typically $\varepsilon =2^{-52}\approx 2.2\times {10}^{-16}$. 
In numerical experiments we find that $I(\alpha, \beta)$ is evaluated rather fast and accurately using the above 
recursion formulas for any $\alpha$ and about $|\beta| < 7$. Computation can be further accelerated and the range of tractable values of $\beta$ extended by invoking the identity
\begin{align}
    I(\alpha, \beta) = \Phi \left(\frac{\beta}{\sqrt{2}}\right) - \Phi(\alpha) + I\left(\frac{\beta}{\sqrt{2}}, \sqrt{2}\alpha\right)
\end{align}
which is derived using results from \citet{owen_table_1980}. In summary, our scheme allows to evaluate the twCRPS for the log-normal distribution fast ($N_\ast <50$) and 
virtually to machine precision for any practically relevant combination of parameters $\mu$, $\sigma$ and $\tau$.

\section{Additional figures}

\begin{table}[h]
\caption{80th and 90th percentile thresholds for each of the four clusters. The colors refer to Figure \ref{fig:locations}.}
\vspace{5mm}
\centering
\begin{tabular}{|l|l|c|l|l|}
\hline
Cluster    & Cluster interpretation         & Number of locations & 80th percentile & 90th percentile \\ \hline
1 (purple) & Inland locations               & 30                  & 4.1m/s          & 5.7m/s          \\
2 (green)  & More exposed inland locations  & 44                  & 5.7m/s          & 7.2m/s          \\
3 (red)    & Coastal locations              & 35                  & 7.7m/s          & 9.8m/s          \\
4 (blue)   & More exposed coastal locations & 15                  & 10.3m/s         & 12.3m/s         \\ \hline
\end{tabular}
\label{tab:percentiles}
\end{table}

\begin{figure}[h]
        \centering
        \includegraphics[width=0.6\textwidth]{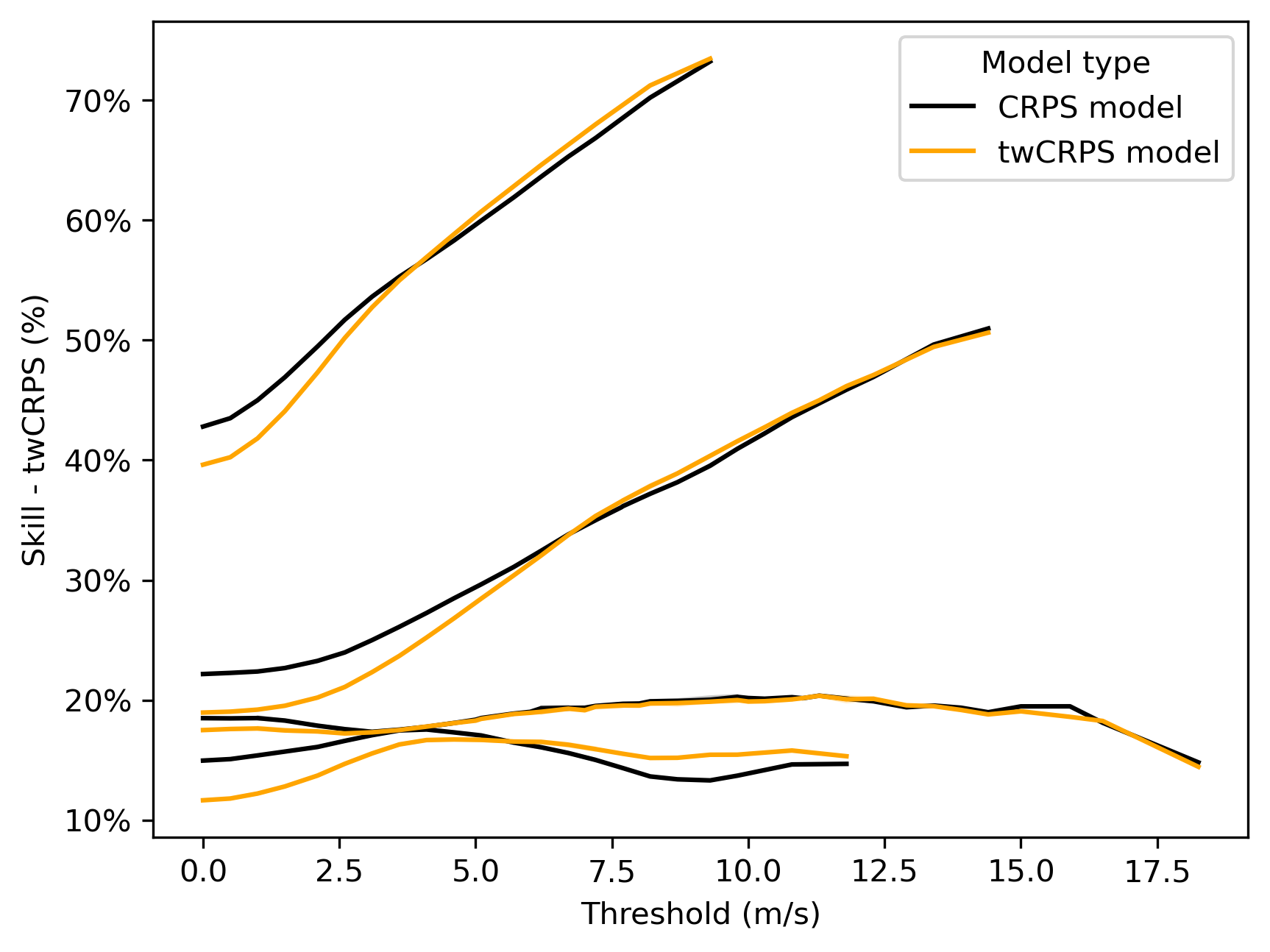}
        \caption{twCRPS skill for the CRPS-trained and twCRPS-trained EMOS model (truncated normal predictive distribution, 90th percentile threshold), compared to the raw ensemble for a range of evaluation thresholds between 0th and 99th percentile of the training distribution. Each line corresponds to one cluster. Note that due to the large percentage differences the relative improvement of the twCRPS model compared to the CRPS model is not very visible. The raw ensemble does not include the forecast control member as this member is not exchangeable with the rest of the ensemble.}
        \label{fig:comp_raw_ensemble}
\end{figure}

\renewcommand{\arraystretch}{1.2}
\begin{table}[h]
\caption{Brier skill score (in \%) over the CRPS-trained model for the twCRPS-trained model for a variety of evaluation thresholds.}
\vspace{5mm}
\centering
\begin{tabular}{cr|r|r|}
\cline{3-4}
\multicolumn{1}{l}{}                       & \multicolumn{1}{l|}{}                    & \multicolumn{1}{l|}{Truncated Normal} & \multicolumn{1}{l|}{Truncated Logistic} \\ \hline
\multicolumn{1}{|l|}{Training Quantile}    & \multicolumn{1}{l|}{Evaluation Quantile} & \multicolumn{1}{l|}{}                 & \multicolumn{1}{l|}{}                   \\ \hline
\multicolumn{1}{|c|}{\multirow{5}{*}{0.8}} & 0.7                                      & -0.62                                 & -0.43                                   \\
\multicolumn{1}{|c|}{}                     & 0.8                                      & -0.26                                 & -0.36                                   \\
\multicolumn{1}{|c|}{}                     & 0.85                                     & 0.14                                  & -0.05                                   \\
\multicolumn{1}{|c|}{}                     & 0.9                                      & 0.52                                  & 0.28                                    \\
\multicolumn{1}{|c|}{}                     & 0.95                                     & 0.52                                  & 0.31                                    \\ \hline
\multicolumn{1}{|c|}{\multirow{5}{*}{0.9}} & 0.7                                      & -3.34                                 & -2.71                                   \\
\multicolumn{1}{|c|}{}                     & 0.8                                      & -1.93                                 & -1.88                                   \\
\multicolumn{1}{|c|}{}                     & 0.85                                     & -0.97                                 & -1.11                                   \\
\multicolumn{1}{|c|}{}                     & 0.9                                      & 0.10                                  & -0.22                                   \\
\multicolumn{1}{|c|}{}                     & 0.95                                     & 0.27                                  & -0.03                                   \\ \hline
\end{tabular}
\label{bss}
\end{table}

\begin{figure}[h]
        \centering
        \includegraphics[width=0.6\textwidth]{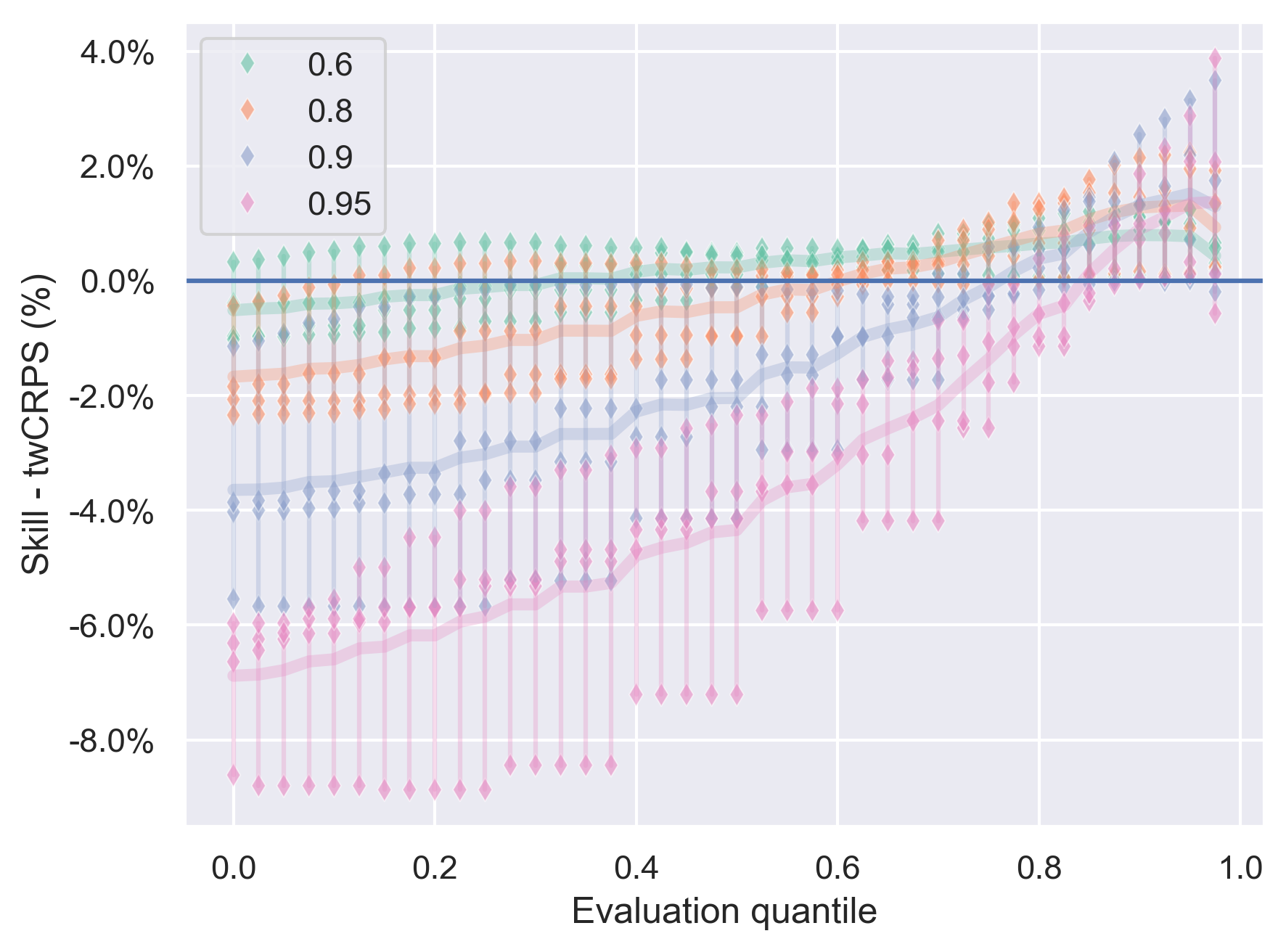}
        \caption{twCRPS skill score (over the CRPS-trained model) as a function of the evaluation threshold for twCRPS-trained models trained using different thresholds.}
        \label{fig:varying_threshold_relative}
\end{figure}

\begin{figure}[h]
        \centering
        \includegraphics[width=0.6\textwidth]{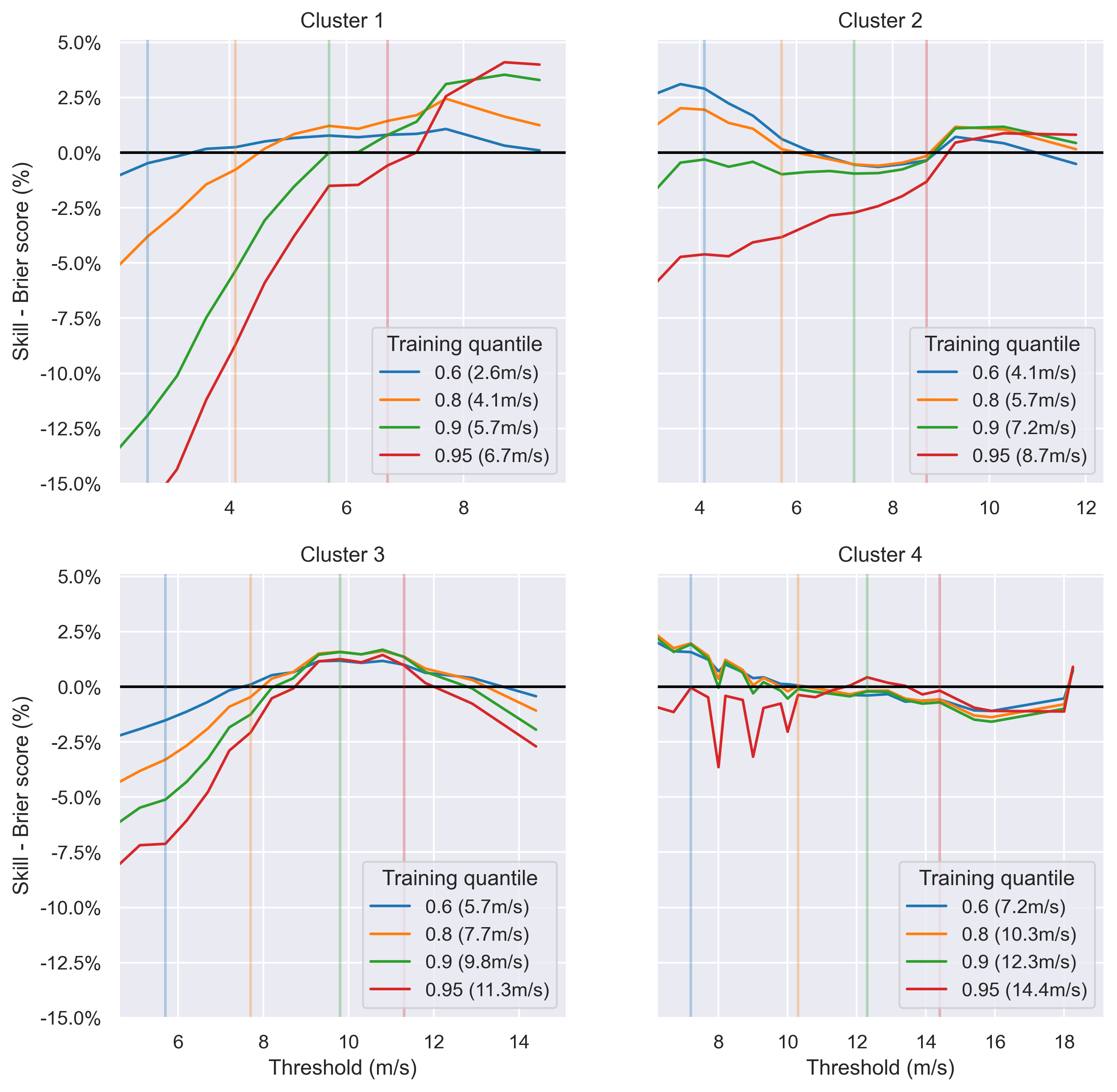}
        \caption{Brier skill score (over the CRPS-trained model) by cluster as a function of the evaluation threshold for twCRPS-trained models trained using different thresholds. Thresholds on the x-axis correspond to wind speeds between the 50th and 99th percentile of the training distribution. The vertical lines indicate the thresholds for which models were trained. Note the different x-axis starting points to Figure \ref{fig:varying_threshold}.}
        \label{fig:brier_score_training_quantiles}
\end{figure}

\end{document}